\newtheorem{theorem}{Theorem}
\newtheorem{definition}[theorem]{Definition}
\newtheorem{corollary}[theorem]{Corollary}
\newtheorem{lemma}[theorem]{Lemma}
\newtheorem{example}[theorem]{Example}
\newtheorem{proposition}[theorem]{Proposition}
\newcommand{\DLL}{\text{DL-Lite}\xspace}
\newcommand{\DLLR}{\text{DL-Lite}\ensuremath{_R}\xspace}
\newcommand{\DLLH}{\text{DL-Lite}\ensuremath{_\text{Horn}}\xspace}
\newcommand{\NC}{\ensuremath{N_C}\xspace}
\newcommand{\NR}{\ensuremath{N_R}\xspace}
\newcommand{\NI}{\ensuremath{N_I}\xspace}
\newcommand{\NN}{\ensuremath{N_N}\xspace}
\newcommand{\NV}{\ensuremath{N_V}\xspace}
\newcommand{\Amc}{\ensuremath{\mathcal{A}}\xspace}
\newcommand{\Dmc}{\ensuremath{\mathcal{D}}\xspace}
\newcommand{\Imc}{\ensuremath{\mathcal{I}}\xspace}
\newcommand{\Nmc}{\ensuremath{\mathcal{N}}\xspace}
\newcommand{\Omc}{\ensuremath{\mathcal{O}}\xspace}
\newcommand{\Tmc}{\ensuremath{\mathcal{T}}\xspace}
\newcommand{\Vmc}{\ensuremath{\mathcal{V}}\xspace}
\newcommand{\Aexa}{\ensuremath{\mathcal{A}_\textsf{exa}}\xspace}
\newcommand{\Oexa}{\ensuremath{\mathcal{O}_\textsf{exa}}\xspace}
\newcommand{\Texa}{\ensuremath{\mathcal{T}_\textsf{exa}}\xspace}
\newcommand{\Aexo}{\ensuremath{\mathcal{A}_0}\xspace}
\newcommand{\Oexo}{\ensuremath{\mathcal{O}_0}\xspace}
\newcommand{\Texo}{\ensuremath{\mathcal{T}_0}\xspace}
\newcommand{\Aexb}{\ensuremath{\mathcal{A}_\textsf{exb}}\xspace}
\newcommand{\Oexb}{\ensuremath{\mathcal{O}_\textsf{exb}}\xspace}
\newcommand{\Texb}{\ensuremath{\mathcal{T}_\textsf{exb}}\xspace}
\newcommand{\Aexc}{\ensuremath{\mathcal{A}_2}\xspace}
\newcommand{\Oexc}{\ensuremath{\mathcal{O}_2}\xspace}
\newcommand{\Texc}{\ensuremath{\mathcal{T}_2}\xspace}
\newcommand{\va}{\ensuremath{\mathbf{a}}\xspace}
\newcommand{\vx}{\ensuremath{\mathbf{x}}\xspace}
\newcommand{\vy}{\ensuremath{\mathbf{y}}\xspace}
\newcommand{\ans}{\ensuremath{\mathsf{ans}}\xspace}
\newcommand{\At}{\ensuremath{\mathsf{At}}\xspace}
\newcommand{\var}{\ensuremath{\mathsf{var}}\xspace}
\newcommand{\AC}{\textsc{AC}\ensuremath{^0}\xspace}
\newcommand{\LS}{\textsc{LogSpace}\xspace}
\newcommand{\NLS}{\textsc{NLogSpace}\xspace}
\newcommand{\NP}{\textsc{NP}\xspace}
\newcommand{\Ican}{\ensuremath{{\mathcal{I}_\mathsf{can}}}\xspace}
\newcommand{\Icanp}{\ensuremath{{\mathcal{I}'_\mathsf{can}}}\xspace}
\newcommand{\ax}[2][1]{\ensuremath{\left<#2,#1\right>}\xspace}
\newcommand{\uv}{\rule{2mm}{1pt}\xspace}
\begin{document}
%
\title{Answering Fuzzy Queries over Fuzzy DL-Lite Ontologies}
%
%
\author[Gabriella Pasi, Rafael Pe\~naloza]
	{Gabriella PASI,
	  Rafael PE\~NALOZA \\
	University of Milano-Bicocca, Italy \\
	\email{\{gabriella.pasi,rafael.penaloza\}@unimib.it}}
%
%
%
\maketitle             
\begin{abstract} 
A prominent problem in knowledge representation is how to answer queries taking into account also the implicit consequences
of an ontology representing domain knowledge. While this problem has been widely studied within the realm of description
logic ontologies, it has been surprisingly neglected within the context of vague or imprecise knowledge, particularly from
the point of view of mathematical fuzzy logic. In this paper we study the problem of answering conjunctive queries and
threshold queries w.r.t.\ ontologies in fuzzy \DLL. Specifically, we show through a rewriting approach that threshold query
answering w.r.t.\ consistent ontologies remains in \AC in data complexity, but that conjunctive query answering is highly
dependent on the selected triangular norm, which has an impact on the underlying semantics. For the idempodent G\"odel t-norm, 
we provide an effective method based on a reduction to the classical case. This paper is under consideration in Theory and Practice of 
Logic Programming (TPLP).
\end{abstract}
\section{Introduction}

Description logics (DLs) \cite{dlhandbook} are a well-known and widely used family of knowledge representation formalisms 
that, thanks to their clear syntax and formal semantics, have been used to represent and deal with the knowledge of 
various representation domains. Among the many members of this family, a subfamily of languages with a limited expressivity, known
as the \DLL family \cite{dl-lite} has a prominent role. In fact, simplifying a bit, \DLL was originally designed with the goal of including 
background knowledge within the task of answering queries, and avoiding the need for an explicit enumeration of all the facts
that are implicitly implied by the domain knowledge.

Consider for example a touristic scenario, which includes information about museums, monuments, restaurants, and pubs. 
Knowing that museums and monuments are touristic attractions, and that restaurants and pubs are eateries, one can
immediately deduce that the modern art museum and the peace monument are touristic attractions, and that the Irish pub is
an eatery, without having to make this knowledge explicit. A user may thus ask for e.g., a \emph{tourist attraction that contains
an eatery}. Using classical query answering techniques \cite{OrSi-RW12}, all attractions that satisfy this requirement can be 
efficiently retrieved.

Being based on classical logic, DLs in general and \DLL in particular are unable to handle imprecise or vague knowledge effectively.
In our touristic scenario, for instance, we may want to extend the knowledge with some additional properties of the objects
of interest. For example, a tourist in a hurry may want to visit the \emph{popular} attractions first; or a backpacker on a budget
may be more interested in finding \emph{cheap} eateries. Note that \emph{cheap} and \emph{popular} are two vague
notions that do not allow for any precise definition. In a simplistic scenario, cheapness may be defined in terms of the mean
cost for a meal, but even then, it is impossible to specify a precise price-point where an eatery stops being cheap; moreover this is 
also a subjective notion. The case
of popularity is even worse, as there is no obvious proxy for it.

To solve this issue, fuzzy extensions of DLs have been widely studied; see for example 
\cite{Borgwardt:PhD,BoPe-SUM17,BCE+-15,LuSt08,Cera:PhD} and references therein. In essence, fuzzy
logic \cite{Haje98} extends classical logic by allowing truth degrees in the interval $[0,1]$ for 
the propositions that contain fuzzy (or vague) predicates.
One can thus say, e.g., that the modern art museum is popular to a degree
of $0.8$ meaning, intuitively, that it is popular, but more popular attractions may exist. 

Interestingly, although fuzzy DLs and their reasoning services have been widely studied, the task of answering queries 
based on fuzzy ontologies has been mostly ignored. Most of the earlier work from this point of view was carried out by
Straccia and Pan.
Specifically, Straccia \cite{Stra-JELIA06} studied the problem of computing the answers with
highest degree on a query w.r.t.\ some background knowledge. This was followed by Pan et al. \cite{PSST-DL07}, who 
considered more complex queries to be answered. 
While from some perspective these works seem to cover the whole area of query answering, 
they were based on the so-called Zadeh semantics, which does not have adequate properties from a mathematical logic
point of view \cite{Haje98}. 
Another limitation of all these approaches is that they allowed only the facts in the ontology to be graded, but
restricted the terminological knowledge to be crisp (i.e., hold fully).
Other work considering query answering in fuzzy DLs 
includes~\cite{Stra-IS12}, where the $k$ answers with the highest degree are retrieved. This latter work is closer to our approach
but has several limitations. Perhaps the most obvious is that its semantics follows a closed-world assumption, even in the
case of background knowledge. In addition, background knowledge is interpreted as a \emph{rule}, where the degrees
of the body of an axiom define the degree of the head, but knowledge about the head cannot be used to infer knowledge about
the body. We, in change, use the open world assumption, as typical in knowledge representation, and use the logical interpretation
of axioms.

Later on, Turhan and Mailis studied the problem of query answering w.r.t.\ background knowledge from the point of view of fuzzy 
logic \cite{Haje98}, where the
semantics are based on the properties of continuous triangular norms \cite{KlMP00}. They developed
a technique for computing the satisfaction degrees of conjunctive queries when the semantics were based on the G\"odel 
t-norm \cite{MaTu-JIST14}.
This technique, which is based on the construction of a classical query, was later implemented and shown to be effective in
\cite{MaTZ-DL15}. However, it still suffered from two main drawbacks: (i) it was only capable to handle the idempotent
(G\"odel)
t-norm, and (ii) terminological knowledge had to still be precise, allowing no graded axioms. The latter condition is essential for 
the correctness of their
approach: their reduction is unable to keep track of the degrees used by the terminological axioms, as this would require
an unbounded memory use.

In this paper, we study the problem of query answering w.r.t.\ \DLL ontologies, filling out the gaps left by the existing work.
To be more explicit, our work is the first to consider adequate semantics from the mathematical fuzzy logic point of view, alongside
graded axioms stating vague knowledge beyond just vague data.
We start by considering the kind of conjunctive queries studied by Turhan and Mailis, but allowing 
the association of numeric degrees 
also in the TBox. Interestingly, although this is a generalization of the previously studied setting, we are able to develop
a much simpler method, which does not rely on rewriting, but rather on a reduction to a classical query answering scenario.
The method is based on the idea of \emph{cut ontologies}, where all knowledge holding to a low degree is ignored. Hence, we 
obtain
a more robust and easier to maintain approach than previous work. Still considering the G\"odel t-norm, we considered the
case of threshold queries, also left open in previous work, in which every conjunct in a query is assigned a different degree.
In this case, a direct reduction to classical query answering does not work, but we were able to adapt the classical rewriting
methods to handle the degrees effectively. 

The final part of the paper considers other t-norms as the underlying semantics for the fuzzy constructors. In this case,
we show through several examples that conjunctive queries cannot be easily handled, but we identify some special cases where
queries can be effectively answered. On the other hand, we show that we can still apply the rewriting technique to answer
threshold queries, even for non-idempotent t-norms. This is a surprising result because in the idempotent scenario threshold
queries are a generalization of conjunctive queries. 

Some of the results in this paper were previously published in \cite{PaPe-RR20}. In addition to full proofs, deeper explanations,
and examples, here we extend that previous work by handling threshold queries, including the full rewriting technique from
Section \ref{sec:tq}. We also provide better results for non-idempotent t-norms, and highlight some of the problems of combining
conjunctions and non-idempotent t-norms in the appendix.

\section{Preliminaries}

We briefly introduce the syntax and semantics of fuzzy \DLLR and other related notions that will be important for this paper.
Let \NC, \NR, and \NI be three mutually disjoint sets whose elements are called \emph{concept names}, \emph{role names},
and \emph{individual names}, respectively. The sets of \DLLR\emph{concepts} and \emph{roles} are built through the 
grammar rules:
\begin{align*}
B::={}& A\mid \exists Q & C::={}&B\mid\neg B \\
Q::={}& P\mid P^- & R::={}&Q\mid\neg Q
\end{align*}
where $A\in\NC$ and $P\in\NR$. Concepts of the form $B$ and roles of the form $Q$ are called \emph{basic}, and all others
are called \emph{general}. 

\begin{definition}[ontology]
A \emph{fuzzy \DLLR TBox} is a finite set of \emph{fuzzy axioms} of the form 
$\left<B\sqsubseteq C,d\right>$ and $\left<Q\sqsubseteq R,d\right>$, 
where $d$ is a number in $[0,1]$. An axiom is \emph{positive} if it does not have negation on its right-hand side and \emph{negative}
otherwise. 
A \emph{fuzzy \DLLR ABox} is a finite set of \emph{fuzzy assertions} of the form 
$\left<B(a),d\right>$ and $\left<P(a,b),d\right>$, where $a,b\in\NI$.
A \emph{fuzzy \DLLR ontology} is a pair of the form $\Omc=(\Tmc,\Amc)$ where \Tmc is a TBox and \Amc is an ABox.
\end{definition}
Note that negations can never occur on the left-hand side of an axiom.
In the remainer of this paper, we will mostly exclude the qualifiers ``fuzzy,'' and ``\DLL'' and simply refer to axioms, ontologies,
etc. 

The semantics of fuzzy \DLLR is based on fuzzy interpretations, which provide a \emph{membership degree} or 
for objects belonging to the different concept and role names. Formally, following the basics of classical
description logics, concept names are interpreted as fuzzy unary relations, and role names are interpreted as fuzzy binary 
relations.
To fully define this semantics in the presence of other constructors according to 
fuzzy logic, we need the notion of a triangular norm (or \emph{t-norm} for short).

\begin{definition}[t-norm]
A \emph{t-norm} $\otimes$ is a binary operator over the real interval $[0,1]$ that is commutative, associative, monotonic, 
and has $1$ as the neutral element; i.e., $1\otimes x=x$ for all $x\in[0,1]$ \cite{KlMP00}. 
\end{definition}
Triangular norms are used to generalize the logical conjunction to handle truth degrees that take values from the interval $[0,1]$. 
Every continuous t-norm defines a 
unique \emph{residuum} $\Rightarrow$ where $f\otimes d\le e$ iff $f\le d\Rightarrow e$. The residuum interprets implications. 
With the help of this operation, it is also possible to interpret other logical operators such as negation ($\ominus d:=d\Rightarrow 0$).
The three basic continuous t-norms are the \emph{G\"odel}, \emph{\L ukasiewicz}, and \emph{product} t-norms, which are 
defined, with their residua and negations in Table~\ref{tab:tnorm}.
\begin{table}[tb]
\caption{The three fundamental continuous t-norms and related operations}
\label{tab:tnorm}
\centering
\begin{tabular}{@{}l@{\qquad}l@{\qquad}l@{\qquad}l@{}}
\toprule
Name & $d \otimes e$ & $d \Rightarrow e$ & $\ominus d$ \\
\midrule
G\"odel & $\min\{d,e\}$ & $\begin{cases}1&d\le e\\ e&\text{otherwise}\end{cases}$ & $\begin{cases}1&d=0\\ 0&\text{otherwise}\end{cases}$\\
\L ukasiewicz & $\max\{d+e-1,0\}$ & $\min\{1-d+e,1\}$ & $1-d$\\
product & $d\cdot e$ & $\begin{cases}1&d\le e\\ e/d&\text{otherwise}\end{cases}$ & $\begin{cases}1&d=0\\ 0&\text{otherwise}\end{cases}$\\
\bottomrule
\end{tabular}
\end{table}
These t-norms are the ``fundamental'' ones in the sense that every other continuous t\mbox{-}norm is isomorphic to the ordinal sum
of copies of them \cite{Haje98,MoSh-AM57}. Hence, as usual, we focus our study on these three t-norms.

Note that the residuum always satisfies that $d\Rightarrow e=1$ iff $d\le e$, and that in the G\"odel and product t-norms the
negation is annihilating in the sense that it maps to 0 any positive value, while the negation of 0 is 1. 
In particular, this means that the negation is not \emph{involutive}; that is, $\ominus\ominus d\not=d$ in general. 
In contrast, the negation operator for the \L ukasiewicz t-norm is involutive. In addition, the \L ukasiewicz t-norm is the only
t-norm (up to isomorphism) with the property that for every $x\in (0,1)$ there exists a $y\in (0,1)$ such that $x\otimes y=0$. 
Specifically, this $y$ is $1-x$. In other words, the \L ukasiewicz t-norm is \emph{nilpotent}.
From now on, unless 
specified explicitly otherwise, we assume that we have an arbitrary, but fixed, t-norm $\otimes$ which underlies the operators 
used. When the t\mbox{-}norm becomes relevant in the following sections, we will often use G, $\Pi$, and \L{} as prefixes 
to express that the underlying t-norm is G\"odel, product, or \L ukasiewicz, respectively, as usual in the literature.

We can now formally define the semantics of the logic. An \emph{interpretation} is a pair $\Imc=(\Delta^\Imc,\cdot^\Imc)$,
where $\Delta^\Imc$ is a non-empty set called the \emph{domain}, and $\cdot^\Imc$ is the \emph{interpretation function}
which maps: (i) every individual name $a\in\NI$ to an element $a^\Imc\in\Delta^\Imc$; (ii) every concept name $A\in\NC$ to a 
function $A^\Imc:\Delta^\Imc\to[0,1]$; and (iii) every role name $P\in\NR$ to a function 
$P^\Imc:\Delta^\Imc\times\Delta^\Imc\to[0,1]$. That is, concept names are interpreted as fuzzy unary relations and
role names are interpreted as fuzzy binary relations over $\Delta^\Imc$. The interpretation function is extended to other
constructors with the help of the t-norm operators as follows. For every $\delta,\eta\in\Delta^\Imc$,
\begin{align*}
(\exists Q)^\Imc(\delta) := {} & \sup_{\delta'\in\Delta^\Imc}Q^\Imc(\delta,\delta') &
(\neg B)^\Imc(\delta) := & \ominus B^\Imc(\delta) &
(\top)^\Imc(\delta) := & 1 \\
(P^-)^\Imc(\delta,\eta) := {} & P^\Imc(\eta,\delta) &
(\neg Q)^\Imc(\delta,\eta) := & \ominus Q^\Imc(\delta,\eta) &
\end{align*}
The interpretation \Imc \emph{satisfies} the axiom 
\begin{itemize}
\item $\left<B\sqsubseteq C,d\right>$ iff $B^\Imc(\delta)\Rightarrow C^\Imc(\delta)\ge d$ holds for every $\delta\in\Delta^\Imc$; and
\item $\left<Q\sqsubseteq R,d\right>$ iff $Q^\Imc(\delta,\eta)\Rightarrow R^\Imc(\delta,\eta)\ge d$ holds for every 
	$\delta,\eta\in\Delta^\Imc$
\end{itemize}
It is a \emph{model} of the TBox \Tmc if it satisfies all axioms in \Tmc.
\Imc \emph{satisfies} the assertion
\begin{itemize}
\item $\left<B(a),d\right>$ iff $B^\Imc(a^\Imc)\ge d$;
\item $\left<P(a,b),d\right>$ iff $P^\Imc(a^\Imc,b^\Imc)\ge d$.
\end{itemize}
It is a \emph{model} of the ABox \Amc if it satisfies all axioms in \Amc, and it is a \emph{model} of the ontology
$\Omc=(\Tmc,\Amc)$ if it is a model of \Tmc and of \Amc.

We note that the classical notion of \DLLR \cite{dl-lite} is a special case of fuzzy \DLLR, where all the axioms and assertions hold
with degree 1. In that case, it suffices to consider interpretations which map all elements to $\{0,1\}$ representing the classical
truth values. When speaking of classical ontologies, we remove the degree and assume it implicitly to be 1.

\begin{example}
\label{exa:run}
Consider an ontology $\Oexa=(\Texa,\Aexa)$ representing some knowledge about a touristic location. The TBox 
\begin{align*}
\Texa =\{ & 
	\ax{\textsf{Monument} \sqsubseteq \textsf{TouristAttraction}}, \quad 
		\ax{\textsf{Museum} \sqsubseteq \textsf{TouristAttraction}}, \\ &
	\ax{\textsf{Pub} \sqsubseteq \textsf{Eatery}}, \quad \ax{\textsf{Restaurant} \sqsubseteq \textsf{Eatery}}, 
		\quad \ax{\textsf{locIn}\sqsubseteq \textsf{Near}}\\ &
	\ax[0.6]{\textsf{Museum} \sqsubseteq \textsf{Popular}}, \quad 
		\ax[0.5]{\exists \textsf{locIn}\sqsubseteq \neg \textsf{Cheap}} & \}
\end{align*}
defines some notions about eateries and tourist attractions, including some vague notions in the last two axioms. 
For example, it expresses that museums are popular (with a degree at least 0.6), and that services located at some attraction are 
not cheap (with degree at least 0.5). 
The ABox
\begin{align*}
\Aexa = \{ &
	\ax{\textsf{Monument}(\textsf{peace})}, \quad \ax{\textsf{Monument}(\textsf{love})}, \\ &
	\ax{\textsf{Museum}(\textsf{modernArt})}, \quad \ax{\textsf{Museum}(\textsf{contArt})}, \quad
		\ax{\textsf{Museum}(\textsf{comic})}, \\ &
	\ax{\textsf{Restaurant}(\textsf{sioux})}, \quad \ax{\textsf{Restaurant}(\textsf{gamberone})}, \\ &
	\ax{\textsf{Pub}(\textsf{irish})}, \quad \ax{\textsf{locIn}(\textsf{sioux},\textsf{modernArt})}, \\ &
	\ax[0.8]{\textsf{Popular}(\textsf{comic})}, \quad \ax[0.6]{\textsf{Cheap}(\textsf{irish})}, \quad
	\ax[0.7]{\textsf{near}(\textsf{irish},\textsf{comic})} & \}
\end{align*}
provides information about the specific attractions and services provided at the location. From this information, we can deduce,
for example, that the \textsf{modernArt} museum is a \textsf{TouristAttraction}, and is \textsf{Popular} to a degree at least
$0.6$.
Under the G\"odel t-norm, a possible model of \Oexa is depicted graphically in Figure~\ref{fig:model}, where any assertion
not depicted is considered to hold to degree 0.
\begin{figure}
\includegraphics[width=\textwidth]{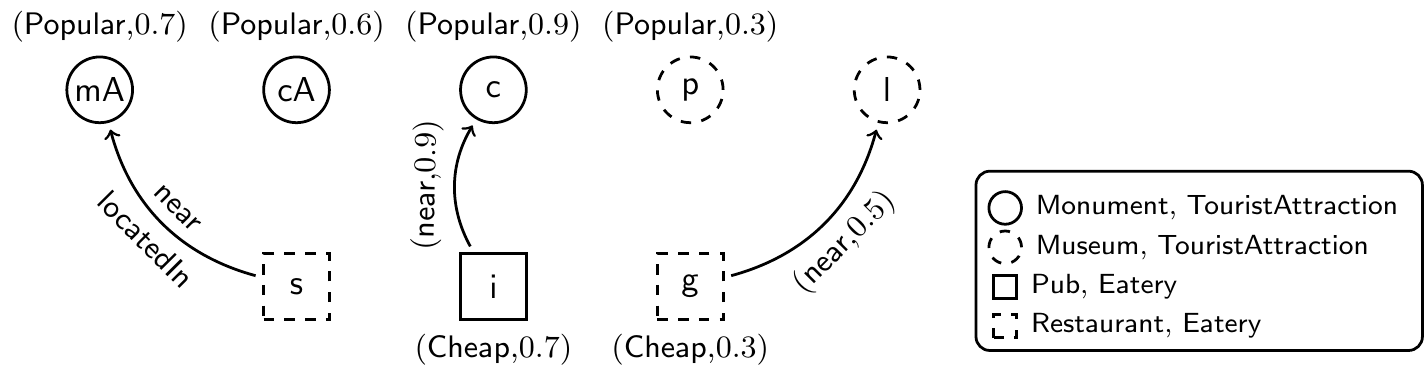}
\caption{A model for the ontology \Oexa from Example~\ref{exa:run}. Individual names are abbreviated to avoid cluttering, and
start with a lower-case letter as customary in DLs. The shape
and border of the nodes represent the crisp concepts, while vague concepts are associated to a degree. }
\label{fig:model}
\end{figure}
 For example, the model from Figure~\ref{fig:model} interprets the \textsf{irish}
pub as being \textsf{Cheap} to degree 0.7, which satisfies the constraint in the ABox requiring this degree to be at least 0.6. 
In addition, the \textsf{peace} monument is \textsf{Popular} to degree 0.3, even though there is no explicit requirement for
this in \Oexa.
Note that under this semantics, any model \Imc of \Oexa should necessarily satisfy that 
$\textsf{Cheap}^\Imc(\textsf{sioux}^\Imc)=0$; this is in fact the case in the model from Figure~\ref{fig:model}. 
Hence, adding any assertion of the form
\ax[d]{\textsf{Cheap}(\textsf{sioux})} with $d>0$ to this ontology would make it inconsistent.
\end{example}
For this paper, we are interested in answering two kinds of queries. The first kind are conjunctive queries, which consider whether 
a combination of facts can be derived from the knowledge in an ontology. In the fuzzy setting, the degree of such derivation
must also be taken into account. 

Let \NV be a set of \emph{variables}, which is disjoint from \NI, \NC, and \NR.
A \emph{term} is an element of $\NV\cup\NI$; that is, an individual name or a variable. An \emph{atom} is an expression of the
form $C(t)$ (concept atom) or $P(t_1,t_2)$ (role atom). Henceforth, \vx and \vy denote tuples of variables. 
\begin{definition}[conjunctive query]
A \emph{conjunctive query} (CQ) is a first-order formula of the form $\exists \vy.\phi(\vx,\vy)$ where $\phi$ is a conjunction
of atoms which only use the variables from \vx and \vy. 
The variables \vy are called \emph{existential variables}, and those in \vx are \emph{answer variables}. 
A \emph{union of conjunctive queries} (UCQ) is a finite set of CQs that use the same answer variables.
Henceforth, $\At(\phi)$ denotes the set of all atoms appearing in $\phi$.
\end{definition}
As in the classical setting, an answer to a conjunctive query, or a union of conjunctive queries, is only considered when
it is provided by every model of the ontology. This is usually known as a \emph{certain answer}.

Given the CQ $q(\vx)=\exists \vy.\phi(\vx,\vy)$, the interpretation \Imc, and a tuple of individuals \va of the same length
as \vx, a \emph{match} is a mapping $\pi$ which assigns to each $a\in\NI$ the value $a^\Imc$; to each variable in \vx the
corresponding element of $\va^\Imc$; and to each variable in \vy an element $\delta\in\Delta^\Imc$. We extend the match
$\pi$ to apply to assertions as follows: $\pi(B(t))=B(\pi(t))$ and $\pi(P(t_1,t_2))=P(\pi(t_1),\pi(t_2))$. The \emph{degree} of
the CQ $q(\vx)$ w.r.t.\ the match $\pi$ is 
\[
q^\Imc(\va^\Imc,\pi(\vy)):=\bigotimes_{\alpha\in\At(\phi)}(\pi(\alpha))^\Imc.
\]
That is, a match maps all the variables in the query to elements of the interpretation domain, where the tuple \va is
used to identify the mapping of the answer variables. The satisfaction or matching degree of the query is the (fuzzy) 
conjunction---that is, the t-norm---of the satisfaction or matching degrees of the atoms under this mapping. 
From now on, $\Pi(\Imc)$ denotes the set of all matches of $q(\vx)$ w.r.t.\ the interpretation \Imc. 
An important difference between classical query answering and our setting is that the 
fuzzy semantics provides a degree
to every possible atom. Hence, in reality $\Pi(\Imc)$ is always defined by the set of all tuples of individuals with length $|\vx|$. 
However, the degree of these matches varies and may often be zero.
For example, for the model \Imc in Figure~\ref{fig:model} and the query $q(x)=\textsf{Popular}(x)$, the set of all matches
$\Pi(\Imc)$ assigns to the variable $x$ any of the constants 
$\{\textsf{mA},\textsf{cA},\textsf{c},\textsf{p},\textsf{l},\textsf{s},\textsf{i},\textsf{g}\}$ to degrees 
$0.7,0.6,0.9,0.3,0,0,0$, and $0$, respectively.
When answering a query, one is often interested in the matches that hold to at least some degree $d$, as defined next.

\begin{definition}[degree queries]
%
%
A tuple of individuals \va is an \emph{answer} of the conjunctive query $q(\vx)$ to degree $d$ w.r.t.\ the interpretation 
\Imc (denoted by
$\Imc\models q(\va)\ge d$) iff $q^\Imc(\va^\Imc):=\sup_{\pi\in\Pi(\Imc)}q^\Imc(\va^\Imc,\pi(\vy))\ge d$. It is a
 \emph{certain answer} (or \emph{answer} for short) of $q(\vx)$ over the ontology \Omc to degree $d$ 
(denoted by $\Omc\models q(\va)\ge d$) iff $\Imc\models q(\va)\ge d$ holds for every model \Imc of \Omc.
%
The crisp set of certain answers of the query $q(\vx)$ w.r.t.\ \Omc and their degree is denoted by $\ans(q(\vx),\Omc)$; that is,
\[
\ans(q(\vx),\Omc):=\{(\va,d)\mid \Omc\models q(\va)\ge d \text{ and for all }d'>d, \Omc\not\models q(\va)\ge d'\}.
\]
\end{definition}
It is important to keep in mind that the atoms in a CQ are not graded, but simply try to match with elements in the domain as both concept and roles are interpreted as fuzzy relations (unary and binary, respectively).
The use of the truth degrees in the ontology becomes relevant in the degree of the answers found. 
Moreover, recall that every tuple of
individuals of length $|\vx|$ belongs to $\ans(q(\vx),\Omc)$, but with different associated degrees.

Returning to our example, while all individuals belong to the set $\ans(q(x))$, for the query $q(x)=\textsf{Popular}(x)$ 
to some degree, the certain answers for $q(x)$ w.r.t.\ \Oexa to degree at least 0.6 are only \textsf{modernArt}, 
\textsf{contArt}, and \textsf{comic}. The latter one is the only answer to degree at least 0.8.

The second kind of query we are interested in generalises that of degree queries, when considering the G\"odel semantics, by 
allowing a degree threshold for
each of the atoms in the conjunction, rather than for the overall conjunction. We formally define this class next.

\begin{definition}[threshold queries]
A \emph{threshold atom} is an expression of the form $\alpha\ge d$, where $\alpha$ is an atom and $d\in[0,1]$. 
A \emph{threshold query} (TQ) is a first-order formula of the form $\exists \vy.\phi(\vx,\vy)$ where $\phi$ is a 
conjunction of threshold atoms using only the variables from \vx and \vy.  
\end{definition}
The notion of a match and an answer to a threshold query are analogous to those of degree queries, with the proviso that 
the degree bounds apply at the level of atoms, and not at the level of queries.

\begin{definition}[TQ answer]
Given an interpretation \Imc and a tuple of individuals \va, the match $\pi$ \emph{satisfies} the threshold atom
$\alpha\ge d$ (denoted by $\pi\models\alpha\ge d$) iff $\alpha^\Imc\ge d$. It \emph{satisfies} the threshold query
$q(\vx)=\exists \vy.\phi(\vx,\vy)$ ($\pi\models q(\va)$) iff $\pi\models \alpha\ge d$ holds for every threshold atom in $q$.

A tuple of individuals \va is an \emph{answer} to the TQ $q(\vx)$ w.r.t.\ the interpretation \Imc ($\Imc\models q(\va)$)
iff there is a match $\pi$ w.r.t.\ \va and \Imc such that $\pi\models q(\va)$. It is a \emph{certain answer} of $q(\vx)$ over
the ontology \Omc iff for every model \Imc of \Omc it holds that $\Imc\models q(\va)$.
\end{definition}
Note that, differently from conjunctive queries, but in an analogous manner to degree queries, the answers to a threshold 
query are
\emph{not} graded. Indeed, a tuple \va may or may not be an answer, and we are interested in finding those tuples which
satisfy the degrees at each of the threshold atoms. 
In a sense, threshold queries provide a more fine-grained structure to deal with the properties of interest within a query in relation
to degree queries. 
Indeed, in a degree query, one can only provide an overall degree which should be obtained after the degrees of all the atoms
are conjoined via the t-norm. In particular, for non-idempotent t-norms and large queries, this conjunction will tend to be smaller
and smaller, and the degrees of the independent atoms have the same influence overall. Even when considering the idempotent 
G\"odel t-norm, a degree query $q(\vx)\ge d$ only expresses that all the atoms in $q$ should hold to degree at least $d$ 
(recall that the G\"odel t-norm refers to the minimum operator), but it is not possible to express that some atoms should hold
with a higher degree. A threshold query, on the other hand, is capable of requiring different degrees for each of the atoms.

\begin{example}
\label{exa:queries}
Suppose, in our running example, that we are interested in finding a cheap eatery that is nearby a popular tourist attraction,
and that we are using the G\"odel semantics.
This basic query could be expressed as%
\footnote{For brevity, we conjoin the atoms in a CQ through commas (`,') instead of $\land$.}
\[
q(x) = \exists y. \textsf{Cheap}(x), \textsf{Popular}(y), \textsf{near}(x,y).
\]
Since this query considers vague concepts and roles, we want to find answers that satisfy it to at least some degree. 
For the degree query $q(x)\ge 0.6$, the only possible answer is the \textsf{irish} pub. 

Suppose now that for us it is more important that the eatery is cheap than the popularity of the tourist attraction. For example,
even though we are content with the tourist attraction being popular to only degree 0.6, the eatery should be cheap to a degree
at least 0.8. This can be expressed through the threshold query
\[
q'(x) = \exists y. \textsf{Cheap}(x)\ge 0.8, \textsf{Popular}(y)\ge 0.6, \textsf{near}(x,y)\ge 0.6.
\]
In this case, the TQ has no answers w.r.t.\ the ontology \Omc. However, any answer to $q'$ would also be an answer to
$q(x)\ge 0.6$, as overall they define the same minimum over all the degrees of interest.
Note that this last claim only holds for the case of the G\"odel semantics. Indeed, as we will see later in this paper, for other
semantics degree queries are not properly special cases of TQs.
\end{example}
A class of conjunctive queries of special significance is that where the tuple of answer variables \vx is empty. This means that the 
answer tuple of individuals provided as an answer must also be empty. In the classical setting, these are called 
\emph{Boolean queries}, because they can only return a Boolean value: true if there is a match for the existential variables in 
every model, and false otherwise. In 
the fuzzy setting, the set of answers to such a query will only contain one element $((),d)$. Thus, in that case, we are only
interested in finding the degree $d$, and call those queries \emph{fuzzy queries}. This degree is the tightest value for which we 
can find a satisfying matching. Formally, the ontology \Omc \emph{entails} the 
fuzzy query $q()$ to degree $d$ iff $\Omc\models q()\ge d$ and $\Omc\not\models q()\ge d'$ for all $d'>d$.
Fuzzy queries allow us to find the degree of a specific answer \va without having to compute $\ans(q(\vx),\Omc)$: simply
compute the degree of the fuzzy query $q(\va)$.

In the case of threshold queries, we can also consider the special case where the answer tuple \vx is empty. In that case, 
as in the classical case, the only possible answer is the empty tuple (if there is a match which satisfies the query) or no answer
if no such match exists. For  that reason, in the case of threshold queries without answer variables we preserve the classical
terminology and call them Boolean (threshold) queries.

\medskip

As it is typically done for query answering in description logics, we consider two measures of complexity: 
\emph{data complexity}, where
only the size of the ABox (and the candidate answer, if any) is considered as part of the input, and \emph{combined complexity} in 
which the size of the whole ontology (including the TBox) is taken into account.%
\footnote{Note that our notion of \emph{combined complexity} does \emph{not} include the query as part of the input, but only the 
ontology. This view contrasts the usual database definition (and papers following it) where the combined complexity includes the
query, but is in line with the terminology used in ontology-based query answering; e.g.~\cite{ACKZ09}. The motivation is to understand
the influence of the knowledge in the complexity, abstracting from the query, which is already known to be a source of intractability
already for databases. In the context of ontology-based query answering, combined complexity is typically only used in
combination with simple fixed queries, which means that the query does not really have an important influence.} 
For data complexity, it is relevant to consider sub-linear complexity classes.
In particular, we consider \AC and \LS. For the full formal definitions, we refer the interested reader to \citeN{papa-complexity} and 
\citeN{BoSi90}. Here we only mention
briefly that evaluation of FO-queries over a database is in \AC on the size of the database \cite{Alice} and \AC is strictly contained
in \LS \cite{FuSS-MST84}. 

In classical \DLLR, query answering w.r.t.\ an ontology is reduced to the standard problem of query answering over a database
through a process known as query rewriting, and thus is in \AC w.r.t.\ data complexity. The main idea is to include in the query all 
the information that is required by the
TBox, in such a way that only assertions from the ABox need to be considered. 
In our running example, note that there is no assertion in the ABox \Aexa which explicitly mentions a tourist attraction. We only
know that the two monuments and the three museums are tourist attractions thanks to the TBox. In this case, the query
rewriting approach would take the query $q(x)=\textsf{TouristAttraction}(x)$ and transform it into the UCQ
\[
\{ \textsf{TouristAttraction}(x), \quad \textsf{Museum}(x), \quad \textsf{Monument}(x) \}
\]
looking ``backwards'' over the axioms in the TBox. The answers of this UCQ over the ABox alone are exactly those of the
original query over the whole ontology.

As seen in this simple example, there are many possible choices to create
the matches that comply with the TBox. Hence, this method results in a UCQ even if the original query is a simple CQ. 
At this point, the ABox is treated as a database, which 
suffices to find all the certain answers. Similarly, a special UCQ can be used to verify that the ontology is \emph{consistent};
that is, whether it is possible to build a model for this ontology. For the full details on how these query rewritings work in classical
\DLLR, see \cite{dl-lite}.
In terms of combined complexity, consistency can be decided in polynomial time; in fact, it is
\NLS-complete~\cite{ACKZ09}.

\section{The Canonical Interpretation}

A very useful tool for developing techniques for answering queries in \DLLR is the canonical interpretation. We first show 
that the same idea can be extended (with the necessary modifications) to fuzzy ontologies, independently of the t-norm 
underlying its semantics.

Let $\Omc=(\Tmc,\Amc)$ be a \DLLR ontology and assume w.l.o.g.\ that there are no axioms of the form 
$\left<\exists Q_1\sqsubseteq\exists Q_2,d\right>\in\Tmc$; any such axiom can be substituted by the two axioms
$\left<\exists Q_1\sqsubseteq A,1\right>,\left<A\sqsubseteq \exists Q_2,d\right>$ where $A$ is a new concept name not 
appearing in \Tmc.
The \emph{canonical interpretation} of \Omc is the interpretation $\Ican(\Omc)=(\Delta^\Ican,\cdot^\Ican)$ over the
domain $\Delta^\Ican:=\NI\cup\NN$---where \NN is a countable set of \emph{constants}---obtained through the following
(infinite) process. Starting from the \emph{empty} interpretation which sets $A^\Ican(\delta)=0$ and $P^\Ican(\delta,\eta)=0$ 
for every $A\in\NC, P\in\NR$ and $\delta,\eta\in\Delta^\Ican$, exhaustively apply the following rules:
\begin{enumerate}[label=\textbf{R\arabic*.}]
 \item\label{rule:r1} if $\left<A(a),d\right>\in\Amc$ and $A^\Ican(a)< d$, then update the value $A^\Ican(a):=d$;
 \item\label{rule:r2} if $\left<P(a,b),d\right>\in\Amc$ and $P^\Ican(a,b)< d$, then update the value $P^\Ican(a,b):=d$;
 \item\label{rule:r3} if $\left<A_1\sqsubseteq A_2,d\right>\in\Tmc$ and $A_2^\Ican(\delta)< A_1^\Ican(\delta)\otimes d$, then update
 	$A_2^\Ican(\delta):=A_1^\Ican(\delta)\otimes d$;
 \item\label{rule:r4} if $\left<A\sqsubseteq \exists P,d\right>\in\Tmc$ and for every $\eta\in\Delta^\Ican$,
 	$P^\Ican(\delta,\eta)<A^\Ican(\delta)\otimes d$ holds, then select a fresh element $\eta_0$ such that 
 	$P^\Ican(\delta,\eta_0)=0$ and
 	update the value $P^\Ican(\delta,\eta_0):=A^\Ican(\delta)\otimes d$;
 \item\label{rule:r5} if $\left<A\sqsubseteq \exists P^-,d\right>\in\Tmc$ and for every $\eta\in\Delta^\Ican$ 
 	$P^\Ican(\eta,\delta)<A^\Ican(\delta)\otimes d$ holds, then select a fresh element $\eta_0$ such that 
 	$P^\Ican(\eta_0,\delta)=0$ and
 	update the value $P^\Ican(\eta_0,\delta):=A^\Ican(\delta)\otimes d$;
 \item\label{rule:r6} if $\left<\exists P\sqsubseteq A,d\right>\in \Tmc$ and $\exists\eta\in\Delta^\Ican$ such that
 	$A^\Ican(\delta)<P^\Ican(\delta,\eta)\otimes d$, then update $A^\Ican(\delta):=P^\Ican(\delta,\eta)\otimes d$;
 \item\label{rule:r7} if $\left<\exists P^-\sqsubseteq A,d\right>\in \Tmc$ and $\exists\eta\in\Delta^\Ican$ such that
 	$A^\Ican(\delta)<P^\Ican(\eta,\delta)\otimes d$, then update $A^\Ican(\delta):=P^\Ican(\eta,\delta)\otimes d$;
 \item\label{rule:r8} if $\left<Q_1\sqsubseteq Q_2,d\right>\in\Tmc$ and $Q_2^\Ican(\delta,\eta)<Q_1^\Ican(\delta,\eta)\otimes d$, then
 	update $Q_2^\Ican(\delta,\eta)$ to the value $Q_1^\Ican(\delta,\eta)\otimes d$.
\end{enumerate}
where the rules are applied in a fair manner; that is, an applicable rule is eventually triggered. The process of
rule application is a monotone non-decreasing function, and as such has a least fixpoint, which is the canonical interpretation
$\Ican(\Omc)$.%
\footnote{By Tarski's Theorem \cite{Tars-55}, this fixpoint is the limit of the (fair) application of the rules starting from the smallest
element; in this case, the empty interpretation as described before.}

Intuitively, $\Ican(\Omc)$ should be a minimal model of \Omc, which describes the necessary conditions of all other models of \Omc.
Indeed, the first two rules ensure that the conditions imposed by the ABox are satisfied, by setting the degrees of the unary
and binary relations to the smallest required value. The remaining rules guarantee that 
all elements of the domain satisfy the positive axioms from the TBox, and each rule is as weak as possible in satisfying these
constraints.
The canonical interpretation of the ontology \Oexa from Example~\ref{exa:run} is depicted in Figure~\ref{fig:canonical}.
\begin{figure}
\includegraphics[width=\textwidth]{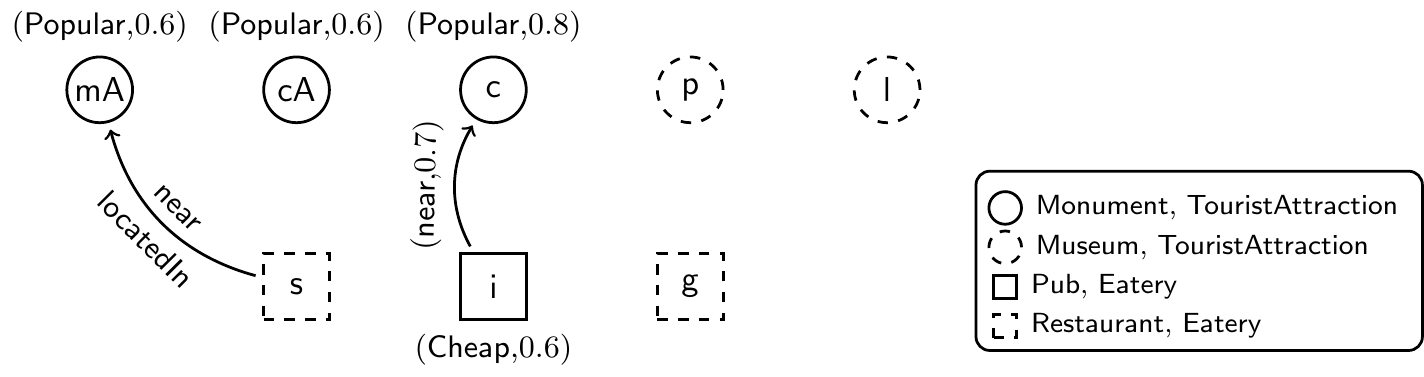}
\caption{The canonical interpretation for the ontology \Oexa from our running example.}
\label{fig:canonical}
\end{figure}
Note that in general it provides a lower membership degree of each individual to every concept when compared to the model
from Figure~\ref{fig:model}. This intuition justifies the name of \emph{canonical} interpretation.
As in the classical case, $\Ican(\Omc)$ can be homomorphically embedded in
every model of \Omc, and hence be used as a representative of them all. We show a similar result with the difference that in this 
case, the homomorphism needs to take into account 
the truth degrees from the interpretation function as well.%
\footnote{A careful reader will notice that the exhaustive application of the rules may produce different interpretations. We 
discuss this issue in further detail later in this section. For now, it suffices to know that all the possible canonical interpretations
are equivalent modulo homomorphisms.}
This is described in the following proposition.

\begin{proposition}
\label{prop:min:can}
Let \Omc be a consistent fuzzy \DLL ontology, 
$\Imc=(\Delta^\Imc,\cdot^\Imc)$ be a model of \Omc, and
$\Ican(\Omc)=(\Delta^\Ican,\cdot^\Ican)$ its canonical interpretation. 
There is a function $\psi$ from 
$\Delta^\Ican$ to $\Delta^\Imc$ such that:
\begin{enumerate}
 \item for each $A\in\NC$ and $\delta\in\Delta^\Ican$, $A^\Ican(\delta)\le A^\Imc(\psi(\delta))$; and
 \item for each $P\in\NR$ and $\delta,\eta\in\Delta^\Ican$, $P^\Ican(\delta,\eta)\le P^\Imc(\psi(\delta),\psi(\eta))$.
\end{enumerate} 
\end{proposition}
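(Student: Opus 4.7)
The plan is to construct $\psi$ incrementally during the rule-application process that defines $\Ican(\Omc)$, and simultaneously verify the two inequalities by induction on the step at which a degree is first updated. In the base case I set $\psi(a):=a^\Imc$ for every $a\in\NI$, and since the initial interpretation has all degrees equal to $0$, the inequalities hold trivially at stage $0$. At each subsequent step exactly one rule fires; $\psi$ is extended only when R4 or R5 introduces a fresh constant, and otherwise remains unchanged.

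For the non-existential rules the verification is a direct case analysis that combines the induction hypothesis with the fact that \Imc is a model. For R1 (resp.\ R2), the update $A^\Ican(a):=d$ comes from $\left<A(a),d\right>\in\Amc$, and $\Imc\models\Amc$ immediately yields $A^\Imc(\psi(a))=A^\Imc(a^\Imc)\ge d$. For R3, the update $A_2^\Ican(\delta):=A_1^\Ican(\delta)\otimes d$ is justified by chaining the inductive bound $A_1^\Ican(\delta)\le A_1^\Imc(\psi(\delta))$ with monotonicity of $\otimes$ and the residuum property applied to satisfaction of $\left<A_1\sqsubseteq A_2,d\right>$ in \Imc, which gives $A_1^\Imc(\psi(\delta))\otimes d\le A_2^\Imc(\psi(\delta))$. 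The cases R6, R7 (existential on the left) and R8 (role inclusion) are entirely analogous, invoking the IH on the corresponding role degree.

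The delicate case is R4 (and symmetrically R5), where a fresh $\eta_0\in\NN$ is introduced with $P^\Ican(\delta,\eta_0):=A^\Ican(\delta)\otimes d$, and $\psi(\eta_0)$ must be chosen so that $P^\Imc(\psi(\delta),\psi(\eta_0))\ge A^\Ican(\delta)\otimes d$. From $\Imc\models\left<A\sqsubseteq\exists P,d\right>$, the residuum property, and the IH $A^\Ican(\delta)\le A^\Imc(\psi(\delta))$ one gets
\[
A^\Ican(\delta)\otimes d \le A^\Imc(\psi(\delta))\otimes d \le (\exists P)^\Imc(\psi(\delta)) = \sup_{\delta'\in\Delta^\Imc} P^\Imc(\psi(\delta),\delta'),
\]
so it suffices to pick $\eta'\in\Delta^\Imc$ realising a value at least $A^\Ican(\delta)\otimes d$ and set $\psi(\eta_0):=\eta'$. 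This is the main obstacle, because the supremum defining $(\exists P)^\Imc$ need not be attained in a general fuzzy model. I would handle it either by restricting to witnessed models (a standard assumption in fuzzy DL), or by noting that the values produced by the construction lie in the finite set generated by the degrees of \Omc under $\otimes$, so that a suitable witness can always be exhibited---strictly when the above inequality is strict, and by the finiteness argument when it is an equality.

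To conclude, since every rule application only increases finitely many degrees, the partial interpretations form a monotone non-decreasing sequence whose pointwise limit is the least fixpoint $\Ican(\Omc)$. The bounds $A^\Ican(\delta)\le A^\Imc(\psi(\delta))$ and $P^\Ican(\delta,\eta)\le P^\Imc(\psi(\delta),\psi(\eta))$ hold at every finite stage with the same right-hand sides, and since an increasing real sequence bounded above by a constant is still bounded above by that constant in the limit, both inequalities are preserved in $\Ican(\Omc)$, yielding the claim.
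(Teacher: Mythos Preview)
Your approach is essentially identical to the paper's: build $\psi$ along the rule applications, fix $\psi(a)=a^\Imc$ on individuals, and verify the two inequalities as an invariant by case analysis on the rule fired, extending $\psi$ only at R4/R5. The paper carries out exactly this induction.

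You are in fact more careful than the paper on one point: for R4/R5 the paper simply asserts ``since \Imc is a model, there exists $\kappa\in\Delta^\Imc$ with $P^\Imc(\psi(\delta),\kappa)\ge A^\Imc(\psi(\delta))\otimes d$'', silently passing from the supremum defining $(\exists P)^\Imc$ to an actual witness. You correctly flag that this step requires the supremum to be attained. Your first fix (restrict to witnessed models) is the standard remedy and is what the paper is implicitly assuming. Your second fix, however, does not work as stated: the finiteness of the set of values generated in $\Ican$ by $\otimes$ says nothing about whether the supremum $\sup_{\delta'}P^\Imc(\psi(\delta),\delta')$ is attained in the arbitrary model \Imc, whose role degrees may range over all of $[0,1]$. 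Even under G\"odel, one can have $\sup_n P^\Imc(\psi(\delta),\delta_n)=A^\Ican(\delta)\otimes d$ with the supremum not realised by any $\delta_n$. So drop the finiteness argument and rely on the witnessed-model assumption; with that in place your proof is complete and matches the paper's.
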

\begin{proof}
Let $\Omc=(\Tmc,\Amc)$.
We construct the function $\psi$ recursively through the rule applications that define $A^\Ican$, and show that the two properties
from the proposition are invariant w.r.t.\ the rule applications. We define first $\psi(a)=a^\Imc$ for all $a\in \NI$. Recall that 
initially, $A^\Ican(\delta)=P^\Ican(\delta,\eta)=0$ for all $A\in \NC, P\in\NR, \delta,\eta\in\Delta^\Ican$. Hence, the properties
hold trivially in this case.

Assume now that the properties hold before a rule application; we show that they also hold afterwards by a case analysis
over the rule used:
\begin{enumerate}[label=\textbf{R\arabic*.}]
\item if $\left<A(a),d\right>\in\Amc$,  since \Imc is a model of this axiom, it follows that $A^\Imc(a^\Imc)\ge d$. The
rule application sets $A^\Ican(a)=d$ and hence $A^\Ican(a)\le A^\Imc(\psi(a))=A^\Imc(a^\Imc)$.
\item if $\left<P(a,b),d\right>\in\Amc$, the rule application sets $P^\Ican(a,b)=d$. Since \Imc satisfies this axiom,
it follows that $P^\Imc(\psi(a),\psi(b))=P^\Imc(a^\Imc,b^\Imc)\ge d=P^\Ican(a,b)$.
\item if $\left<A_1\sqsubseteq A_2,d\right>\in\Tmc$, the rule application over a given $\delta\in\Delta^\Ican$ updates
$A_2^\Ican(\delta)$ to $A_1^\Ican(\delta)\otimes d$. Since \Imc satisfies this axiom, by the induction hypothesis and monotonicity
of $\otimes$ we know that
$A_2^\Imc(\psi(\delta))\ge A_1^\Imc(\psi(\delta))\otimes d\ge A_1^\Ican(\delta)\otimes d =A_2^\Ican(\delta)$.
\item if $\left<A\sqsubseteq \exists P,d\right>\in\Tmc$, let $\delta$ be the element over which the rule is applicable, and $\eta_0$
the fresh element selected by the rule application. Since \Imc is a model, we know that there exists an element $\kappa\in\Delta^\Imc$
such that $P^\Imc(\psi(\delta),\kappa)\ge A^\Imc(\psi(\delta))\otimes d$. We thus define $\psi(\eta_0):=\kappa$. By the induction
hypothesis and monotonicity of $\otimes$ we get 
$P^\Imc(\psi(\delta),\psi(\eta_0))P^\Imc(\psi(\delta),\kappa)\ge A^\Imc(\psi(\delta))\otimes d\ge A^\Ican(\psi(\delta))\otimes d=
	P^\Ican(\delta,\eta_0)$.
\item if $\left<A\sqsubseteq \exists P^-,d\right>\in\Tmc$, let $\delta$ be the element over which the rule is applicable, and $\eta_0$
the fresh element selected by the rule application. Since \Imc is a model, we know that there exists an element $\kappa\in\Delta^\Imc$
such that $P^\Imc(\kappa,\psi(\delta))\ge A^\Imc(\psi(\delta))\otimes d$. We thus define $\psi(\eta_0):=\kappa$. By the induction
hypothesis and monotonicity of $\otimes$ we get 
$P^\Imc(\psi(\eta_0),\psi(\delta))P^\Imc(\kappa,\psi(\delta))\ge A^\Imc(\psi(\delta))\otimes d\ge A^\Ican(\psi(\delta))\otimes d=
	P^\Ican(\eta_0,\delta)$.
\item if $\left<\exists P\sqsubseteq A,d\right>\in\Tmc$, then for the chosen $\delta,\eta\in\Delta^\Ican$ we have by the induction
hypothesis that $A^\Ican(\delta)=P^\Ican(\delta,\eta)\otimes d\le P^\Imc(\psi(\delta),\psi(\eta))\otimes d \le A^\Imc(\psi(\delta))$.
\item if $\left<\exists P\sqsubseteq A,d\right>\in\Tmc$, then for the chosen $\delta,\eta\in\Delta^\Ican$ we have by the induction
hypothesis that $A^\Ican(\delta)=P^\Ican(\eta,\delta)\otimes d\le P^\Imc(\psi(\eta),\psi(\delta))\otimes d \le A^\Imc(\psi(\delta))$.
\item if $\left<Q_1\sqsubseteq Q_2,d\right>\in\Tmc$, the rule application over given $\delta,\eta\in\Delta^\Ican$ updates
$Q_2^\Ican(\delta,\eta)$ to $Q_1^\Ican(\delta,\eta)\otimes d$. Since \Imc satisfies this axiom, by the induction hypothesis and 
monotonicity of $\otimes$ we know that
$$
Q_2^\Imc(\psi(\delta),\psi(\eta))\ge Q_1^\Imc(\psi(\delta),\psi(\eta))\otimes d\ge Q_1^\Ican(\delta,\eta)\otimes d =Q_2^\Ican(\delta,\eta).
$$
\end{enumerate}
Hence, the result holds after the fair application of all possible rules.
\end{proof}
Importantly, note that the construction of $\Ican(\Omc)$ does not take the negations into account; e.g., the axiom
\ax[0.5]{\exists\textsf{locIn}\sqsubseteq \neg\textsf{Cheap}} is never used during this construction. 
The effect of this is that 
$\Ican(\Omc)$ might not be a model of \Omc at all. 

\begin{example}
\label{exa:incons}
Consider the fuzzy \DLLR ontology $\Oexa=(\Texo,\Aexo)$ where 
\begin{align*}
\Texo:={} &\{\left<A_1\sqsubseteq\neg A_2,1\right>\}, \\
\Aexo:={} &\{\left<A_1(a),0.5\right>,\left<A_2(a),0.5\right>\}.
\end{align*}
Under the G\"odel semantics, by application of the first rule, the canonical interpretation maps $A_1^\Ican(a)=A_2^\Ican(a)=0.5$. 
However, this violates
the axiom in \Texo, which requires that $A_1^\Ican(a)\Rightarrow\ominus A_2^\Ican(a)=1$. That is, it requires that
$A_1^\Ican(a)<\ominus A_2^\Ican(a)$, which is only possible when $A_1^\Ican(a)=0$ or $A_2^\Ican(a)=0$.
Note that a similar phenomenon could be observed also in the TBox \Texa of our running example, which contains an
axiom with a negated concept.
\end{example}
The issue is that the negative axioms may introduce inconsistencies, by enforcing upper bounds in the degrees used, which
are not verified by the canonical interpretation; recall, in fact, that the previously described construction monotonically increases
the degrees to satisfy the minimal requirements, but never verifies whether these degrees affect some upper bound. 
On the other hand, we can prove that, as long as there is a model, $\Ican(\Omc)$ is one.

\begin{proposition}
\label{prop:can:model}
$\Ican(\Omc)$ is a model of \Omc iff \Omc is consistent. 
\end{proposition}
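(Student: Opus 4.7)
The plan is to handle the two directions separately. The forward direction is essentially free: if $\Ican(\Omc)$ is a model of \Omc, then trivially \Omc has a model and is therefore consistent.

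For the converse I would fix an arbitrary model \Imc of \Omc (which exists by hypothesis) and verify axiom by axiom that $\Ican(\Omc)$ itself is a model. The easy part consists of the ABox assertions and the positive TBox axioms: these are satisfied directly by construction. Indeed, by fairness the rules R1--R8 are triggered exhaustively, so at the fixpoint each positive axiom holds. For instance, R3 guarantees $A_2^\Ican(\delta)\ge A_1^\Ican(\delta)\otimes d$ for every $\delta$, which by the residuation property is equivalent to the satisfaction condition $A_1^\Ican(\delta)\Rightarrow A_2^\Ican(\delta)\ge d$; the remaining positive forms are handled analogously via R1, R2, and R4--R8.

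The hard part, and where consistency is genuinely used, is the negative axioms, which the rules ignore entirely (as Example~\ref{exa:incons} already illustrates). Here I would argue by contradiction. Suppose that some negative axiom, say $\ax[d]{B\sqsubseteq\neg B'}\in\Tmc$, is violated at some $\delta\in\Delta^\Ican$. I would invoke Proposition~\ref{prop:min:can}, extended from concept names and role names to basic concepts $\exists Q$ by taking suprema (a routine check), to obtain a map $\psi\colon\Delta^\Ican\to\Delta^\Imc$ satisfying $B^\Ican(\delta)\le B^\Imc(\psi(\delta))$ and $B'^\Ican(\delta)\le B'^\Imc(\psi(\delta))$. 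Since $\ominus$ is antitonic (because $\ominus x=x\Rightarrow 0$ and $\Rightarrow$ is antitonic in its first argument) and $\Rightarrow$ is antitonic in its first argument and monotonic in its second, these inequalities combine to give
\[
B^\Ican(\delta)\Rightarrow\ominus B'^\Ican(\delta) \;\ge\; B^\Imc(\psi(\delta))\Rightarrow\ominus B'^\Imc(\psi(\delta)) \;\ge\; d,
\]
where the last inequality holds because \Imc satisfies the axiom. This contradicts the assumed violation, and the role inclusion case $\ax[d]{Q\sqsubseteq\neg Q'}$ proceeds identically. The only subtlety I anticipate is ensuring that Proposition~\ref{prop:min:can} really does lift to $B=\exists Q$, but this follows immediately from the definition of the existential constructor as a supremum together with the monotonicity of suprema.
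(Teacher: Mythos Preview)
Your proposal is correct and follows essentially the same strategy as the paper: positive axioms hold by the fixpoint construction, and a violated negative axiom is transferred along the homomorphism $\psi$ of Proposition~\ref{prop:min:can} to contradict the assumed model \Imc. The only cosmetic difference is that the paper rewrites the violation as $\ominus C^\Ican(\delta)<B^\Ican(\delta)\otimes d$ and then pushes the inequality through $\psi$, whereas you work directly with the antitonicity/monotonicity of $\Rightarrow$; these are equivalent one-line manipulations. Your explicit remark that Proposition~\ref{prop:min:can} must be lifted from concept names to basic concepts $B=\exists Q$ is in fact more careful than the paper, which silently uses $B^\Ican(\delta)\le B^\Imc(\psi(\delta))$ for arbitrary basic $B$.
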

\begin{proof}
The \emph{only if} direction is trivial, hence we focus on showing that if \Omc is consistent, then $\Ican(\Omc)$ is a model of \Omc.
Note first that, by construction, $\Ican(\Omc)$ satisfies all positive axioms. Otherwise, a rule would trigger, and since the construction
applies all rules fairly until exhaustion, no rule is applicable in the resulting interpreation $\Ican(\Omc)$. Hence, if $\Ican(\Omc)$
is not a model of \Omc, there must exist a negative axiom of the form (i) $\left<B\sqsubseteq \neg C,d\right>$ or
(ii) $\left<Q\sqsubseteq \neg R,d\right>$ that is not satisfied by the canonical interpretation. We consider the case (i); the other
case can be treated analogously.

If $\Ican(\Omc)\not\models\left<B\sqsubseteq \neg C,d\right>$, then there must exist an element $\delta\in\Delta^\Ican$ such that
$B^\Ican(\delta)\Rightarrow (\neg C)^\Ican(\delta)<d$ or, equivalently, $\ominus C^\Ican(\delta)< B^\Ican(\delta)\otimes d$.
Since \Omc is consistent, there must exist a model $\Imc=(\Delta^\Imc,\cdot^\Imc)$ of \Omc. By Proposition \ref{prop:min:can}, there 
exists a function $\psi:\Delta^\Ican\to \Delta^\Imc$ such that, in particular, $B^\Ican(\delta)<B^\Imc(\psi(\delta))$ and
$C^\Ican(\delta)<C^\Imc(\psi(\delta))$. By antitonicity of $\ominus$, the latter means that 
$\ominus C^\Imc(\psi(\delta))\le \ominus C^\Ican(\delta)$ and hence
\[
\ominus C^\Imc(\psi(\delta))\le \ominus C^\Ican(\delta) < B^\Ican(\delta)\otimes d \le B^\Imc(\psi(\delta))\otimes d
\]
But this means that $\Imc\not\models \left<B\sqsubseteq \neg C,d\right>$, which contradicts the assumption that \Imc was a model
of \Omc.
\end{proof}
It can be seen that the ontology \Oexo from Example \ref{exa:incons} is inconsistent under the G\"odel semantics. On the 
other hand, under 
the \L ukasiewicz semantics, \Oexo is in fact consistent which, by this proposition, means that $\Ican(\Omc)$ is a model of this
ontology. This is easily confirmed by recalling that the \L ukasiewicz negation is involutive; that is $\ominus d=1-d$. In the
case of the example, we have $\ominus 0.5=0.5$; the axiom \ax{A_1\sqsubseteq \neg A_2} is satisfied because
$0.5 = A_1^\Ican(a) \le (\neg A_2)^\Ican(a)=0.5$.

The consequence of the last two propositions is that $\Ican(\Omc)$ is complete for existential positive queries, and in particular for 
conjunctive queries and threshold queries.
\begin{corollary}
\label{cor:ican}
If \Omc is a consistent fuzzy \DLLR ontology, then 
\begin{enumerate}
 \item for every CQ $q(\vx)$, answer tuple \va, and $d\in[0,1]$ it holds that $\Omc\models q(\va)\ge d$ iff 
 $\Ican(\Omc)\models q(\va)\ge d$;
 \item for every TQ $q(\vx)$ and answer tuple \va, $\Omc\models q(\va)$ iff $\Ican(\Omc)\models q(\va)$.
\end{enumerate}
\end{corollary}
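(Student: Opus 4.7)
The plan is to deduce both items by combining the two propositions just proved. Proposition~\ref{prop:can:model} tells us that consistency of \Omc makes $\Ican(\Omc)$ itself a model of \Omc, while Proposition~\ref{prop:min:can} supplies, for each other model \Imc of \Omc, a function $\psi:\Delta^\Ican\to\Delta^\Imc$ that does not decrease concept or role degrees. The ``only if'' directions are then immediate: if $\Omc\models q(\va)\ge d$ (resp.\ $\Omc\models q(\va)$), then in particular the model $\Ican(\Omc)$ satisfies the corresponding statement.

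For the ``if'' direction of item~1, I would fix an arbitrary model \Imc of \Omc and its associated $\psi$, and transfer matches from $\Ican(\Omc)$ to \Imc. Given a match $\pi$ of $q(\vx)=\exists\vy.\phi(\vx,\vy)$ in $\Ican(\Omc)$, I would set $\pi'(a)=a^\Imc$ for $a\in\NI$ and for the answer variables, and $\pi'(y)=\psi(\pi(y))$ for each existential variable. Applying Proposition~\ref{prop:min:can} atom by atom then gives $(\pi(\alpha))^\Ican \le (\pi'(\alpha))^\Imc$ for every $\alpha\in\At(\phi)$, and monotonicity of $\otimes$ propagates this inequality through the whole conjunction. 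Because $q^\Imc(\va^\Imc)$ is defined as a supremum that need not be attained, I would then, for each $\varepsilon>0$, pick a match $\pi$ in $\Ican(\Omc)$ of matching degree at least $d-\varepsilon$, obtain a corresponding $\pi'$ in \Imc of matching degree at least $d-\varepsilon$, and conclude $\Imc\models q(\va)\ge d$ by letting $\varepsilon\to 0$.

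The ``if'' direction of item~2 follows the same template but is simpler, since threshold queries are not graded: if $\pi$ witnesses $\Ican(\Omc)\models q(\va)$, then $(\pi(\alpha))^\Ican\ge d_\alpha$ for every threshold atom $\alpha\ge d_\alpha$, and defining $\pi'=\psi\circ\pi$ exactly as above yields $(\pi'(\alpha))^\Imc\ge d_\alpha$ atom by atom, so $\pi'$ witnesses $\Imc\models q(\va)$. The main subtlety in both items is that the transfer only preserves \emph{lower} bounds on degrees, which is exactly what CQs and TQs require; if atoms could express negations or other upper-bound constraints the argument would break, which is precisely the asymmetry already highlighted by Example~\ref{exa:incons} and Proposition~\ref{prop:can:model} and the reason why consistency suffices as the only additional hypothesis.
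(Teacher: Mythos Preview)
Your proposal is correct and follows essentially the same approach as the paper: combine Proposition~\ref{prop:can:model} (so $\Ican(\Omc)$ is a model, giving the ``only if'' direction) with Proposition~\ref{prop:min:can} (so degrees in $\Ican$ are lower bounds, giving the ``if'' direction via the homomorphism $\psi$). The paper's own argument is a two-sentence sketch that leaves the match-transfer and the handling of the supremum implicit; your version spells these out explicitly, including the $\varepsilon$-argument for the supremum and the observation that only lower-bound constraints are being transferred, but the underlying reasoning is the same.
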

\begin{proof}
Proposition \ref{prop:can:model} states that $\Ican(\Omc)$ is a model; hence anything that does not follow from it cannot be
an answer. On the other hand, Proposition \ref{prop:min:can} states that the degree of every atom in any model is at least the
degree given by \Ican, and hence if a tuple is an answer in \Ican, it is also an answer in every other model.
\end{proof}

\subsection*{A short note on the canonical interpretation}

Before delving deeper into the process of answering queries (the main contribution of this paper), it is worth considering the
canonical interpretation in more detail, starting with the definite article used in its naming. Indeed, although we always
speak about \emph{the} canonical interpretation, the actual structure produced is not necessarily unique, and depends on the
order in which rules are chosen to apply, specially in relation to rules \textbf{R4} and \textbf{R5}, which introduce new relevant
elements. This is highlighted in the following example.

\begin{example}
\label{exa:can:mult}
Consider an ontology containing the axioms
\begin{align*}
\Amc := {} & \{ \left<A(a), 1\right>, \left<B(a), 1\right> \} \\
\Tmc := {} & \{ \left<A\sqsubseteq \exists R, 0.3 \right>, \left<B\sqsubseteq \exists R, 0.5 \right> \}
\end{align*}
After applying the rule \textbf{R1} over the ABox axioms, we have an interpretation where $A^\Ican(a)=B^\Ican(a)=1$. 
At this point, rule \textbf{R4} is applicable for any of the two axioms in \Tmc. If we first apply it to the first axiom, we select a 
fresh element, e.g.\ $\eta_0$ and set $R(a,\eta_0)=0.3$; at this point, the rule is still applicable to the second axiom. This application
requires selecting a new fresh element (now $\eta_1$) and set $R(a,\eta_1)=0.5$. At this point, no rules are applicable and we have
a canonical interpretation. 

If instead we chose first to apply the rule on the second axiom, we would choose a fresh element (say, $\eta_2$) and set
$R(a,\eta_2)=0.5$. This application immediately disallows the application of the rule to the first axiom, and hence the process
stops.
\end{example}
Note that the two interpretations built in this example are not equivalent (see Figure \ref{fig:can:cons}).
\begin{figure}[tb]
\includegraphics[width=\textwidth]{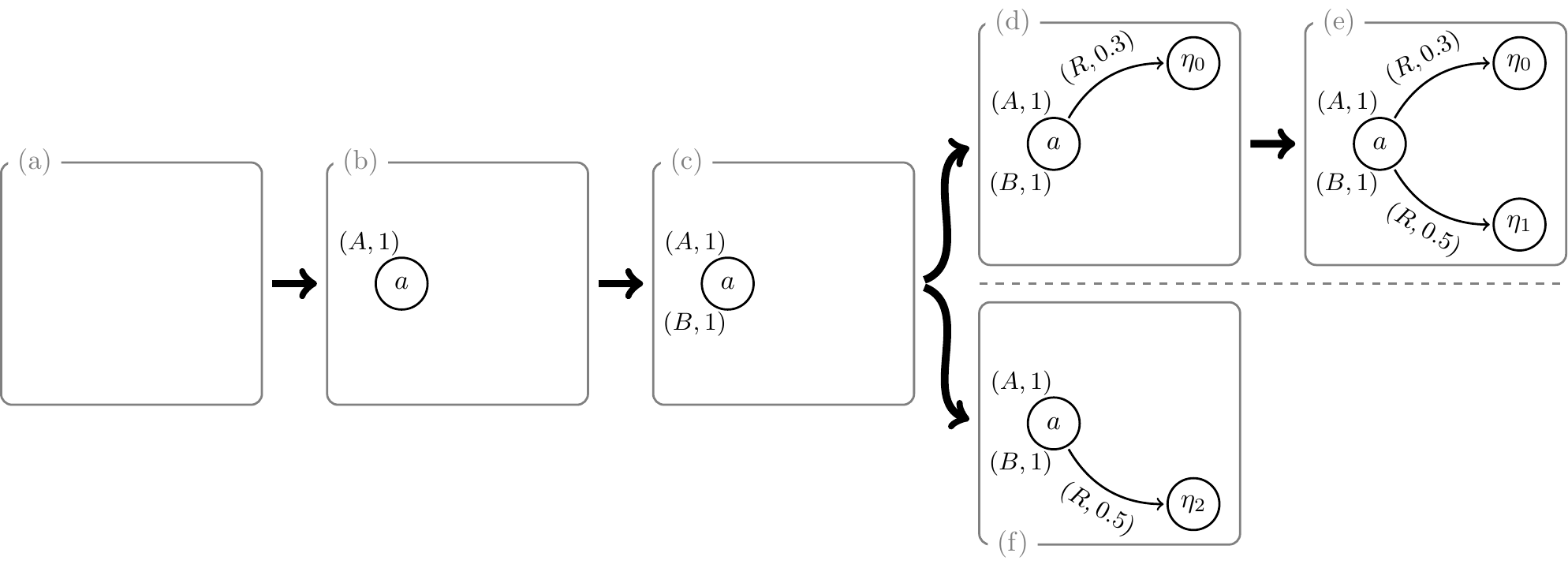}
\caption{Two canonical interpretation constructions from the ontology in Example \ref{exa:can:mult}. From the empty interpretation
(a), \textbf{R1} is applied to each assertion to reach (c). One can either apply \textbf{R4} to $\left<A\sqsubseteq \exists R, 0.3 \right>$
and go through the upper branch (d) to build the interpretation (e); or to $\left<B\sqsubseteq \exists R, 0.5 \right>$ and obtain (f) 
directly.}
\label{fig:can:cons}
\end{figure}
However, they are homomorphic in the sense specified by Proposition \ref{prop:min:can}. This is not a coincidence. In fact, note that
the proofs of Propositions \ref{prop:min:can} and \ref{prop:can:model} do not depend on the order of rule applications, but only on
the fact that these rules were exhaustively (and fairly) applied. If the ontology is consistent, by the latter proposition the interpretation
obtained is a model, regardless of the order chosen, and by the former proposition, it is homomorphic to all the interpretations which
can be derived following different application orderings. In other words, the canonical interpretation is unique \emph{up to
homomorphism}. In the following, we disregard this issue and consider an arbitrary, but fixed, canonical interpretation as unique.

\bigskip

We now return to the issue of answering queries. Corollary \ref{cor:ican} states that these queries can be answered through
the canonical interpretation.
Obviously, such an approach is impractical; in fact, impossible, because it is an infinite model 
constructed through an infinitary
process. Additionally, we still have the burden to prove that the ontology is consistent, which is a prerequisite for the use of 
Corollary \ref{cor:ican} to answer queries. Fortunately, for the G\"odel and product t-norms, we can resort to existing 
results from the literature for this latter task.

\begin{definition}[classical version]
Let $\Omc=(\Tmc,\Amc)$ be a fuzzy \DLL ontology. The \emph{classical version} $\widehat\Omc$  of \Omc is defined by
$\widehat\Omc:=(\widehat\Tmc,\widehat\Amc)$, where
\begin{align*}
\widehat\Tmc:={} & \{ B\sqsubseteq C \mid \left<B\sqsubseteq C,d\right>\in\Tmc, d>0\} \cup
			 \{ Q\sqsubseteq R \mid \left<Q\sqsubseteq R,d\right>\in\Tmc, d>0\}, \\
\widehat\Amc:={} & \{ B(a) \mid \left<B(a),d\right>\in\Tmc, d>0\} \cup
			 \{ P(a,b) \mid \left<P(a,b),d\right>\in\Tmc, d>0\}.
\end{align*}
\end{definition}
That is, $\widehat\Omc$ contains all the axioms and assertions from \Omc which hold with a positive degree---note that
any fuzzy axiom or assertion with degree 0 could be removed w.l.o.g.\ anyway. 
The following result is a direct consequence of work on more expressive fuzzy DLs \cite{BoDP-AIJ15}.
\begin{proposition}
\label{prop:reduc}
Let \Omc be a G-\DLLR or $\Pi$-\DLLR ontology. Then \Omc is consistent iff $\widehat\Omc$ is consistent.
\end{proposition}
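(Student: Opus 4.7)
The plan is to prove both directions by passing between fuzzy and crisp models. The single observation that powers the reduction is that, for the G\"odel and product t-norms, the residuum and negation are \emph{positivity-preserving}: $a\Rightarrow b>0$ iff ($a=0$ or $b>0$), and $\ominus d>0$ iff $d=0$. This contrasts sharply with the \L ukasiewicz case, where the involutive negation allows negative axioms to be satisfied with all degrees strictly inside $(0,1)$ (cf.\ Example~\ref{exa:incons}), and is precisely what makes the reduction below fail there.

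For the only-if direction, I would start from a fuzzy model $\Imc=(\Delta^\Imc,\cdot^\Imc)$ of \Omc and define its \emph{crispification} $\widehat\Imc$ on the same domain by setting $A^{\widehat\Imc}(\delta)=1$ iff $A^\Imc(\delta)>0$, and analogously for roles, extending to compound concepts and roles via the classical \DLLR semantics. Checking that $\widehat\Imc$ is a model of $\widehat\Omc$ would proceed by case analysis on axiom shape. For a positive axiom $\ax[d]{B\sqsubseteq C}$ with $d>0$, the condition $B^\Imc(\delta)\Rightarrow C^\Imc(\delta)\ge d>0$ combined with the positivity property forces the classical inclusion $B^{\widehat\Imc}(\delta)\le C^{\widehat\Imc}(\delta)$. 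Negative axioms are handled by the annihilating negation: $(\neg C)^\Imc(\delta)>0$ iff $C^\Imc(\delta)=0$ iff $(\neg C)^{\widehat\Imc}(\delta)=1$. Role axioms and assertions are immediate, and one verifies as a routine check that crispification commutes with $\exists Q$, since a supremum is positive iff some term is.

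For the if direction, given a classical model $\widehat\Imc$ of $\widehat\Omc$, I would re-read it as a fuzzy interpretation \Imc on the same domain by identifying the Boolean values $\{0,1\}$ with the corresponding elements of $[0,1]$. For any axiom $\ax[d]{B\sqsubseteq C}\in\Tmc$ with $d>0$, the classical $B\sqsubseteq C$ lies in $\widehat\Tmc$ and is satisfied by $\widehat\Imc$, so $B^\Imc(\delta)\le C^\Imc(\delta)$ holds pointwise, whence $B^\Imc(\delta)\Rightarrow C^\Imc(\delta)=1\ge d$ in either t-norm. Axioms with $d=0$ are trivially satisfied, assertions are handled symmetrically since all positive assertions in \Amc land in $\widehat\Amc$, and the existential and negation constructors again match because the fuzzy value sets are $\{0,1\}$.

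The main obstacle I anticipate is verifying that the crispification commutes with the $\exists Q$ constructor and with negated concepts/roles; everything else is essentially syntactic. Once the positivity property of the residuum and the annihilating character of the negation are isolated, the case analysis is short and uniform across the two t-norms. As the text notes, the statement can alternatively be obtained as the specialisation to the \DLLR fragment of the more general reduction in \cite{BoDP-AIJ15}.
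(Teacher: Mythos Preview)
Your proposal is correct. The paper itself does not supply a proof of this proposition at all: it simply states that the result ``is a direct consequence of work on more expressive fuzzy DLs \cite{BoDP-AIJ15}'' and moves on. So there is no paper-side argument to compare against beyond the citation, which you already acknowledge in your final sentence.

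Your direct argument via crispification in one direction and reading a crisp model as a $\{0,1\}$-valued fuzzy model in the other is sound. The key technical facts you isolate---that for the G\"odel and product residua $a\Rightarrow b>0$ iff ($a=0$ or $b>0$), and that the negation is annihilating---are exactly what is needed, and your check that crispification commutes with $\exists Q$ (a supremum of non-negative reals is positive iff some term is) and with $\neg$ goes through. The handling of inverse roles $P^-$ is immediate since crispification acts pointwise on the binary relation. Compared to deferring to \cite{BoDP-AIJ15}, your approach has the advantage of being self-contained and of making transparent precisely which property of the t-norm is used, which in turn explains cleanly why the reduction fails for \L ukasiewicz (as the paper observes via Example~\ref{exa:incons}).
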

In those cases, consistency checking can be reduced to the classical case, without the need to modify the query or the basic 
formulation of the ontology.
For the ontology \Oexo in Example \ref{exa:incons}, we have 
$\widehat\Oexo=(\{A_1\sqsubseteq \neg A_2\},\{A_1(a),A_2(a)\})$, which is inconsistent in the classical case, thus showing
(through Proposition~\ref{prop:reduc}) that it is inconsistent under the G\"odel and product t-norm semantics.
We note that the example also shows that Proposition \ref{prop:reduc} does not hold for the \L ukasiewicz t-norm, since
we have established that \Oexo is consistent under this semantics, although its classical version remains inconsistent under
classical interpretations.

A particular consequence of Proposition \ref{prop:reduc} is that deciding consistency of G-\DLLR and $\Pi$-\DLLR ontologies is in
\AC w.r.t.\ data complexity, and \NLS-complete w.r.t.\ combined complexity, where the \NLS lower bound comes from known
results in classical \DLLR \cite{ACKZ09}. Thus adding truth degrees does not affect the 
complexity of this basic reasoning task.
We now turn our attention to the task of query answering with the different semantics, starting with the idempotent case
of the G\"odel t-norm. We consider first the case of conjunctive queries, which allows for a simple solution, and then study
threshold queries for which a rewriting technique is needed.

Before studying how to answer queries over fuzzy \DLLR ontologies and its complexity, we note that in the case that an
ontology is classical---i.e., it uses only degree 1 in all its axioms---its canonical interpretation constructed as described in
this section is equivalent to the classical canonical interpretation from \cite{dl-lite}. This fact will be used in the following sections.

\section{Answering Conjunctive Queries over G\"odel Ontologies}

For this and the following section, we are always considering the G\"odel t-norm as the underlying operator for interpreting
all fuzzy statements, and in particular the conjunctive queries.

The G\"odel semantics are very limited in their expressivity. On the one hand, we have seen that $\ominus d\in\{0,1\}$ for all
$d\in[0,1]$. This means that whenever we have an axiom of the form $\left<B\sqsubseteq \neg B',d\right>$ or 
$\left<Q\sqsubseteq \neg Q',d\right>$ with $d>0$, we are in fact saying that for every element $\delta\in\Delta^\Imc$,
if $B^\Imc(\delta)>0$, then $B'^\Imc(\delta)=0$---because in this case $\ominus B'^\Imc(\delta)=1$, which is the only possible
way of satisfying the axiom. A similar argument holds for role axioms.
Thus, for this section we can assume w.l.o.g.\ that all negative axioms hold with degree 1; i.e., they are of the form
\ax{B\sqsubseteq \neg B'} or \ax{Q\sqsubseteq \neg Q'}.
On the other hand, a positive axiom of the form $\left<B\sqsubseteq B',d\right>$ requires that for every $\delta\in\Delta^\Imc$,
$B'^\Imc(\delta)\ge \min\{B^\Imc(\delta),d\}$. That is, the only way to guarantee that an atom gets a high degree is to
use axioms with a high degree. We use these facts to reduce reasoning tasks in this setting to the classical \DLLR scenario.

Consider a consistent G-\DLLR ontology \Omc. We can decide a lower bound for the 
degree of a CQ simply by querying a \emph{cut} of \Omc. 
\begin{definition}[cut ontology]
Given a value $\theta\in(0,1]$, the 
\emph{$\theta$-cut} of the ontology \Omc is defined as the sub-ontology $\Omc_{\ge \theta}:=(\Tmc_{\ge \theta},\Amc_{\ge \theta})$ 
where
\begin{align*}
\Tmc_{\ge \theta}:={} & \{ \left<\gamma,e\right>\in\Tmc \mid e\ge \theta\}, \\
\Amc_{\ge \theta}:={} & \{ \left<\alpha,e\right>\in\Amc \mid e\ge \theta\}.
\end{align*}
\end{definition}
That is, $\Omc_{\ge \theta}$ is the subontology containing only the axioms and assertions that hold to degree at least $\theta$.
To show that $\theta$-cuts suffice for answering queries, we use the canonical interpretation.

Note that including new axioms or assertions to an ontology would result in an
update of the canonical interpretation which only increases the degree of some of the elements of the domain. More precisely,
if $\Ican(\Omc)$ is the canonical interpretation of $\Omc=(\Tmc,\Amc)$, then the canonical interpretation of 
$\Omc'=(\Tmc\cup\{\left<B\sqsubseteq C,d\right>\},\Amc)$ is the result of applying the construction rules starting from 
$\Ican(\Omc)$. This holds because the resulting canonical interpretation is not dependent on the order in which rules are
applied (and hence axioms taken into account) as long as this is done fairly.%
\footnote{Formally, different rule application orderings yield homomorphic interpretations. See the discussion in the previous section.}
Since
$\Ican(\Omc)$ has already applied all the rules on axioms of \Omc exhaustively, the only remaining rule applications will be based on
the new axiom $\left<B\sqsubseteq C,d\right>$ and new applications over \Tmc arising from it. Under the G\"odel semantics, all 
the updates
increase the interpretation function up to the value $d$; that is, if $\cdot^\Icanp$ is the interpretation function of
$\Ican(\Omc')$, the difference between $\Ican(\Omc)$ and $\Ican(\Omc')$ is that there exist some elements such that 
$A^\Ican(\delta)<A^\Icanp(\delta)=d$, and similarly for roles there exist some pairs $\delta,\eta$ such that 
$P^\Ican(\delta,\eta)<P^\Icanp(\delta,\eta)=d$. For all others, the degrees remain unchanged.
Moreover, if $d_0$ is the smallest degree appearing in the ontology \Omc, then its canonical interpretation uses only truth
degrees in $\{0\}\cup[d_0,1]$; that is, no truth degree in $(0,d_0)$ appears in $\Ican(\Omc)$.
With these insights we are ready to produce our first results. Recall, once again, that for the rest of this section, we always 
consider that the semantics is based on the G\"odel t-norm; i.e., we have a G-\DLLR ontology.

\begin{lemma}
\label{lem:cut}
Let \Omc be a consistent G-\DLLR ontology, $q(\vx)$ a query, \va a tuple of individuals, and $\theta\in(0,1]$. Then
$\Omc\models q(\va)\ge \theta$ iff $\Omc_{\ge \theta}\models q(\va)\ge \theta$.
\end{lemma}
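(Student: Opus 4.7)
The plan is to translate the question to one about canonical interpretations via Corollary~\ref{cor:ican}, and then exploit the minimum-based structure of the G\"odel t-norm. The direction $(\Leftarrow)$ is immediate: since $\Omc_{\ge \theta}$ is a syntactic subset of $\Omc$, every model of $\Omc$ is a model of $\Omc_{\ge \theta}$, so a certain answer of $q(\va)$ at degree $\theta$ under $\Omc_{\ge \theta}$ is a fortiori a certain answer at that degree under $\Omc$. Hence I focus on the harder direction $(\Rightarrow)$.

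Applying Corollary~\ref{cor:ican} to both ontologies, it suffices to prove that $\Ican(\Omc)\models q(\va)\ge\theta$ iff $\Ican(\Omc_{\ge\theta})\models q(\va)\ge\theta$. The key observation is that each of the rules \ref{rule:r1}--\ref{rule:r8} updates some value to $\min(v,d)$, where $d$ is the degree of the triggering axiom and $v$ is either a previously computed interpretation value or $1$ (for \ref{rule:r1} and \ref{rule:r2}). By induction on the fair sequence of rule applications constructing $\Ican(\Omc)$, I would show that whenever an entry $B^\Ican(\delta)$ or $Q^\Ican(\delta,\eta)$ attains a value at least $\theta$, the entire chain of rule applications responsible for that value invokes only axioms with degree $\ge\theta$, that is, axioms of $\Omc_{\ge\theta}$.

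I would then use this to establish a correspondence between the anonymous witnesses of $\Ican(\Omc)$ occurring in atoms of value $\ge\theta$ and those of $\Ican(\Omc_{\ge\theta})$. Anonymous elements created by \ref{rule:r4} or \ref{rule:r5} from an axiom of degree $<\theta$ only appear in atoms whose value is bounded above by that degree, hence are invisible to any match that must reach value $\ge\theta$; elements created from axioms with degree $\ge\theta$ are also created, and carry the same value, in $\Ican(\Omc_{\ge\theta})$. Named individuals correspond to themselves. Given a match $\pi$ witnessing $\Ican(\Omc)\models q(\va)\ge\theta$, transferring $\pi$ through this correspondence yields a match in $\Ican(\Omc_{\ge\theta})$ that also witnesses $q(\va)\ge\theta$, since the matching degree of a CQ under the G\"odel t-norm is the minimum of the values of its atoms.

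The main obstacle is verifying that rule applications using low-degree axioms cannot sabotage those using high-degree axioms; in particular that the blocking side-condition of \ref{rule:r4} and \ref{rule:r5}, namely that $P^\Ican(\delta,\eta)<A^\Ican(\delta)\otimes d$ for all $\eta$, is never satisfied by a pre-existing low-degree witness when the triggering axiom has degree $\ge\theta$. This holds because under the G\"odel t-norm $A^\Ican(\delta)\otimes d=\min(A^\Ican(\delta),d)$, so while tracing a value $\ge\theta$ we have both $A^\Ican(\delta)\ge\theta$ and $d\ge\theta$, whereas a pre-existing low-degree witness has $P^\Ican(\delta,\eta)<\theta$ and thus fails to block the rule. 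Hence the rules of $\Ican(\Omc_{\ge\theta})$ fire ``in step'' with the high-degree rules of $\Ican(\Omc)$, and the correspondence goes through.
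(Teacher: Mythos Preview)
Your proof is correct and rests on the same core idea as the paper: pass to canonical interpretations via Corollary~\ref{cor:ican} and exploit that under the G\"odel t-norm every rule update is a minimum, so a value $\ge\theta$ can only be produced by a chain of axioms all of degree $\ge\theta$. The paper's argument is more concise because it invokes the incremental-construction observation made just before the lemma---viewing $\Ican(\Omc)$ as obtained from $\Ican(\Omc_{\ge\theta})$ by adding the axioms of degree $<\theta$, whose updates therefore all stay below $\theta$---and then argues by contrapositive; this sidesteps the explicit element-by-element correspondence and the blocking analysis for \textbf{R4}/\textbf{R5} that you carry out directly.
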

\begin{proof} 
Since $\Omc_{\ge \theta}\subseteq \Omc$, every model of \Omc is also a model of $\Omc_{\ge \theta}$. Hence, if
$\Omc_{\ge \theta}\models q(\va)\ge \theta$, then $\Omc\models q(\va)\ge \theta$.

For the converse, assume that $\Omc_{\ge \theta}\not\models q(\va)\ge \theta$. By Corollary \ref{cor:ican}, this means that
$\Ican(\Omc_{\ge \theta})\not\models q(\va)\ge \theta$. That is, $q^\Ican(\va^\Ican)< \theta$. Let 
$\Ican(\Omc)=(\Delta^\Icanp,\cdot^\Icanp)$ be the canonical interpretation of \Omc. Recall that the difference between
\Omc and $\Omc_{\ge \theta}$ is that the former has some additional axioms with degrees smaller than $\theta$. As argued before,
this means that the difference between $\Ican(\Omc)$ and $\Ican(\Omc_{\ge \theta})$ are just some degrees, which are all
smaller than $\theta$; that is, for every $A\in\NC$, $P\in\NR$, and $\delta,\eta\in\Delta^\Icanp$, if $A^\Icanp(\delta)\ge \theta$,
then $A^\Ican(\delta)\ge \theta$ and if $P^\Icanp(\delta,\eta)\ge \theta$, then $P^\Ican(\delta,\eta)\ge \theta$. By assumption, this
means that $q^\Icanp(\va^\Icanp)<\theta$, and hence $\Ican(\Omc)\not\models q(\va)\ge \theta$. Thus, 
$\Omc\not\models q(\va)\ge \theta$.
\end{proof}
What this lemma states is that in order to find a lower bound for the degree of a query, one can ignore all the axioms and
assertions that provide a smaller degree than the bound we are interested in. However, one still needs to answer a query for 
a fuzzy ontology ($\Omc_{\ge \theta}$ is still fuzzy), for which we still do not have any effective method. The following lemma 
solves this issue, considering the classical version of this ontology.
\begin{lemma}
\label{lem:class}
Let \Omc be a consistent G-\DLLR ontology such that $\Omc_{\ge \theta}=\Omc$ for some $\theta>0$. Then, 
$\Omc\models q(\va)\ge \theta$ iff
$\widehat\Omc\models q(\va)$.
\end{lemma}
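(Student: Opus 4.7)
The plan is to reduce both sides to properties of the canonical interpretation and then exploit the fact that, under G\"odel semantics, with all axioms having degree at least $\theta$, the canonical interpretation behaves essentially like its classical counterpart when we focus on the threshold $\theta$.

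First, by Corollary~\ref{cor:ican}, $\Omc\models q(\va)\ge \theta$ iff $\Ican(\Omc)\models q(\va)\ge \theta$, and by the classical result mentioned at the end of Section~3, $\widehat\Omc\models q(\va)$ iff the classical canonical interpretation $\Ican(\widehat\Omc)$ satisfies $q(\va)$. So it suffices to compare $\Ican(\Omc)$ and $\Ican(\widehat\Omc)$.

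Next, I would establish the following correspondence between the two canonical interpretations: there is a bijection between their domains (identical on \NI and matching anonymous elements introduced by rule applications) such that, for all $A\in\NC$, $P\in\NR$, and matched $\delta,\eta$,
\[
A^{\Ican(\Omc)}(\delta) \ge \theta \ \text{iff}\ \delta\in A^{\Ican(\widehat\Omc)}, \qquad
P^{\Ican(\Omc)}(\delta,\eta) \ge \theta \ \text{iff}\ (\delta,\eta)\in P^{\Ican(\widehat\Omc)}.
\]
The proof proceeds by induction on the (fair) application of the rules \textbf{R1}--\textbf{R8} under both constructions, performed in lockstep. The key observation is that the G\"odel t-norm is $\min$ and, by assumption, every axiom degree $d$ appearing in \Omc satisfies $d\ge\theta$; hence $A^{\Ican(\Omc)}(\delta)\otimes d \ge \theta$ precisely when $A^{\Ican(\Omc)}(\delta)\ge \theta$. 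Thus each fuzzy rule firing with value $\ge\theta$ mirrors the corresponding classical rule firing over $\widehat\Omc$, and vice versa; rules triggering with value $0$ correspond to classical rules not firing. In particular, the anonymous elements introduced by \textbf{R4}/\textbf{R5} are created in exactly the same circumstances on both sides (assuming the same fair ordering).

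Finally, using this correspondence, I would translate the query evaluation. A match $\pi$ witnesses $q^{\Ican(\Omc)}(\va) \ge \theta$ iff for every atom $\alpha\in\At(\phi)$, $(\pi(\alpha))^{\Ican(\Omc)}\ge\theta$ (since the t-norm is $\min$, the overall value is $\ge\theta$ iff every conjunct is), which by the correspondence is equivalent to $\pi$ being a classical match for $q(\va)$ in $\Ican(\widehat\Omc)$. This yields both directions of the equivalence.

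The main obstacle is the inductive correspondence between the two canonical constructions: I need to argue that the same ordering of rule applications can be followed in both, and that the bijection between fresh elements is preserved — which relies crucially on both the idempotency of $\min$ and the hypothesis $\Omc_{\ge\theta}=\Omc$. Without the latter, smaller degrees in the ontology could trigger fuzzy rule applications that have no classical counterpart (or vice versa for the threshold), breaking the lockstep induction. The homomorphism-uniqueness remark following Example~\ref{exa:can:mult} is what justifies that fixing a compatible ordering on both sides is legitimate.
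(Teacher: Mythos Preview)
Your overall strategy---reduce both sides to canonical interpretations and then compare them at threshold $\theta$---is sound, and the final step (under $\min$, a match has value $\ge\theta$ iff every atom does) is correct. But the claimed \emph{bijection} via a lockstep induction on rule applications fails at \textbf{R4}/\textbf{R5}. Take $\Amc=\{\ax{A(a)},\ax{B(a)}\}$, $\Tmc=\{\ax[0.3]{A\sqsubseteq\exists R},\ax[0.5]{B\sqsubseteq\exists R}\}$ and $\theta=0.3$, exactly as in Example~\ref{exa:can:mult}. Applying the first axiom first, the fuzzy construction creates $\eta_0$ with $R(a,\eta_0)=0.3$; the second axiom then still fires (since $0.3<0.5$) and creates a second fresh $\eta_1$. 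Classically, once $\eta_0$ exists, $a$ already has an $R$-successor and the rule for $B\sqsubseteq\exists R$ does \emph{not} fire. So with the \emph{same} ordering the two constructions produce domains of different sizes; your appeal to the homomorphism-uniqueness remark does not help, because that remark concerns different orderings of one construction, not synchrony between the fuzzy and the classical one. The repair is to drop the bijection and use two homomorphisms instead: $\Ican(\widehat\Omc)$, read as a $\{0,1\}$-valued interpretation, is a model of \Omc, so Proposition~\ref{prop:min:can} gives a map $\Ican(\Omc)\to\Ican(\widehat\Omc)$ carrying $\ge\theta$-matches to classical matches; conversely, the $\theta$-threshold of $\Ican(\Omc)$ is a classical model of $\widehat\Omc$ (this is precisely your key observation about $\min$ and axiom degrees $\ge\theta$), so the classical canonical homomorphism carries matches back.

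For comparison, the paper's proof is shorter and asymmetric. The forward direction avoids canonical interpretations entirely: every classical model of $\widehat\Omc$ is already a fuzzy model of \Omc, so $\Omc\models q(\va)\ge\theta$ forces $q^\Imc(\va^\Imc)\ge\theta>0$, hence $=1$, in each such model. For the converse it simply invokes the observation stated just before the lemma that all degrees occurring in $\Ican(\Omc)$ lie in $\{0\}\cup[\theta,1]$, so from $\widehat\Omc\models q(\va)$ one gets $q^{\Ican(\Omc)}(\va)>0$, hence $\ge\theta$. Your (repaired) argument makes the step ``$\widehat\Omc\models q(\va)\Rightarrow q^{\Ican(\Omc)}(\va)>0$'' explicit, which the paper leaves implicit; but the paper's forward direction is strictly simpler than yours.
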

\begin{proof}
Every model of $\widehat\Omc$ is also a model of \Omc, with the additional property that the interpretation function maps 
all elements to $\{0,1\}$. If $\Omc\models q(\va)\ge \theta>0$, then for every model \Imc of $\widehat\Omc$ it holds that
$q^\Imc(\va^\Imc)\ge \theta>0$, and thus $q^\Imc(\va^\Imc)=1$, which means that $\widehat\Omc\models q(\va)$.

Conversely, if $\widehat\Omc\models q(\va)$, the canonical interpretation $\Ican(\Omc)$ must be such that 
$q^\Ican(\va^\Ican)>0$; but as argued before, since \Omc only has axioms and assertions with degrees $\ge \theta$, it must
be the case that all degrees of $\Ican(\Omc)$ are in $\{0\}\cup[\theta,1]$, and hence $q^\Ican(\va^\Ican)\ge \theta$. This implies,
by Corollary \ref{cor:ican} that $\Omc\models q(\va)\ge \theta$.
\end{proof}
Note that the condition of this lemma, which requires that $\Omc_{\ge \theta}=\Omc$, is only stating that all the degrees in the 
ontology \Omc are
at least $\theta$. That condition is immediately satisfied by a cut ontology, and hence the lemma can be applied directly to it.

Lemmas~\ref{lem:cut} and~\ref{lem:class} together provide a method for reducing answering degree queries over 
G\mbox{-}\DLLR ontologies to query answering in classical \DLLR.

\begin{theorem}
\label{thm:reduct}
If \Omc is a consistent G-\DLLR ontology and $\theta>0$, then it holds that $\Omc\models q(\va)\ge \theta$ iff 
$\widehat\Omc_{\ge \theta}\models q(\va)$.
\end{theorem}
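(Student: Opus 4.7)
The plan is to obtain the theorem by simply chaining together Lemma~\ref{lem:cut} and Lemma~\ref{lem:class}, which together already isolate the two reductions needed: first, dropping low-degree axioms, and second, collapsing the remaining fuzzy ontology to its classical version.

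First I would apply Lemma~\ref{lem:cut} to conclude that $\Omc\models q(\va)\ge\theta$ if and only if $\Omc_{\ge\theta}\models q(\va)\ge\theta$. This is a direct invocation of the lemma with the given $\theta>0$, and requires no additional work beyond noting that the consistency of \Omc is already assumed.

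Next I would apply Lemma~\ref{lem:class} to the ontology $\Omc_{\ge\theta}$. For this I need to check the two side conditions: (i) $\Omc_{\ge\theta}$ is consistent, and (ii) $(\Omc_{\ge\theta})_{\ge\theta}=\Omc_{\ge\theta}$. Condition (i) follows because every model of \Omc is a model of the subset $\Omc_{\ge\theta}$, so consistency is inherited from the assumption that \Omc is consistent. Condition (ii) is immediate from the definition of the $\theta$-cut, since by construction every axiom and assertion in $\Omc_{\ge\theta}$ already holds to degree at least $\theta$, so taking the $\theta$-cut again leaves the ontology unchanged. Lemma~\ref{lem:class} then yields $\Omc_{\ge\theta}\models q(\va)\ge\theta$ iff $\widehat{\Omc_{\ge\theta}}\models q(\va)$.

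Combining the two equivalences gives the desired statement. There is no real obstacle here: the theorem is essentially a packaging of the two previous lemmas, and the only subtlety to flag is verifying that the side conditions of Lemma~\ref{lem:class} are met by the cut ontology, which as noted above is straightforward. The value of the statement is thus largely in recording the resulting algorithmic reduction: degree-threshold query answering over a consistent G-\DLLR ontology is reduced to classical CQ answering over the classical version of its $\theta$-cut, so complexity results for classical \DLLR transfer immediately.
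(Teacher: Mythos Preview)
Your proposal is correct and matches the paper's approach exactly: the paper presents Theorem~\ref{thm:reduct} as an immediate consequence of Lemmas~\ref{lem:cut} and~\ref{lem:class}, noting just before the theorem that the condition $\Omc_{\ge\theta}=\Omc$ required by Lemma~\ref{lem:class} is immediately satisfied by a cut ontology. Your explicit verification of the consistency of $\Omc_{\ge\theta}$ and of the idempotence of the cut is a welcome addition of detail, but there is nothing to correct.
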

This means that we can use a standard ontology-based query answering system to answer fuzzy queries in \DLLR as well.
Note that the approach proposed by Theorem \ref{thm:reduct} can only decide whether the degree of an answer to a query
is at least $\theta$, but it needs the value $\theta\in (0,1]$ as a parameter. If, instead, we are interested in computing the degree of an
answer, or $\ans(q(\vx),\Omc)$, we can still use a classical query answering method as an underlying black-box aid as 
described next.

Since the TBox \Tmc and the ABox \Amc which compose the ontology \Omc are both finite, the set 
$\Dmc:=\{d\mid \left<\alpha,d\right>\in\Tmc\cup\Amc\}$ of degrees appearing in the ontology is also finite; in fact, its size
is bounded by the size of \Omc. Hence, we can assume that \Dmc is of the form $\Dmc=\{d_0,d_1,\ldots,d_n,d_{n+1}\}$
where $d_0\ge 0,d_{n+1}=1$ and for all $i,0\le i\le n$, $d_{i}<d_{i+1}$. In order to find the degree of an answer \va to a query
$q$, we proceed as follows: starting from $i:=n+1$, we iteratively ask the query $\Omc_{\ge d_i}\models q(\va)$ and
decrease $i$ until the query is answered affirmatively, or $i$ becomes 0 (see Algorithm \ref{alg:degree}). 
\begin{algorithm}[tb]
\DontPrintSemicolon
\KwData{Ontology \Omc, query $q$, answer \va, $\Dmc=\{d_0,d_1,\ldots,d_{n+1}\}$}
\KwResult{The degree of $q(\va)$ w.r.t.\ \Omc}
$i\gets n+1$ \;
$\Nmc\gets\widehat\Omc_{\ge 1}$ \;
\While{$\Nmc\not\models q(\va)$ \textbf{and} $i>0$ }{
  $i \gets i-1$ \;
  $\Nmc\gets\widehat\Omc_{\ge d_i}$ \;
}
\Return $d_i$ \;
\caption{Compute the degree of an answer to a query}
\label{alg:degree}
\end{algorithm}
In the former case, $d_i$ is the degree for $q(\va)$; in the latter, the degree is 0---i.e., \va is not an answer of $q$.%
\footnote{Note that the algorithm can be made more efficient using a binary search, instead of a linear decrease of available
degrees. We chose this presentation to provide a clear association with Corollary \ref{cor:logspace}.}

During the execution of this algorithm, each classical query needed at line 3 can be executed in \AC (and in particular in \LS) 
in the size of the data; i.e., the ABox as shown in \cite{ACKZ09}. The iterations in the loop do not affect the overall space used, 
as one 
can simply produce a new query every time and clean up the previous information. Overall, this means that the degree of an 
answer can be computed in \LS in data complexity, using a classical query answering engine. 

\begin{corollary}
The degree of an answer \va to a query $q$ w.r.t.\ the G-\DLLR ontology \Omc is computable in logarithmic space w.r.t.\ the size
of the ABox (i.e., in data complexity).
\end{corollary}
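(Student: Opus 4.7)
The plan is to analyze Algorithm \ref{alg:degree} and show that it can be implemented in logarithmic space in the size of the ABox, using Theorem \ref{thm:reduct} to reduce every iteration to a classical query-answering call.

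First, I would argue correctness. By Theorem \ref{thm:reduct}, for each $d_i\in\Dmc$ we have $\Omc\models q(\va)\ge d_i$ iff $\widehat{\Omc_{\ge d_i}}\models q(\va)$. Moreover, since the degree of $q(\va)$ under \Omc is given by $\sup_{\pi}q^\Ican(\va^\Ican,\pi(\vy))$ evaluated in the canonical interpretation, and only finitely many degrees occur in \Omc (so only finitely many appear in $\Ican(\Omc)$, all drawn from $\{0\}\cup\Dmc$), the actual degree of the answer is exactly the largest $d_i\in\Dmc$ such that $\Omc\models q(\va)\ge d_i$, or $0$ if no such $d_i$ exists. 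The algorithm sweeps $\Dmc$ from $d_{n+1}=1$ downwards and returns the first $d_i$ for which the classical entailment succeeds, which is therefore the correct degree.

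Next, I would turn to the space analysis, treating the TBox and query as fixed (data complexity). The set $\Dmc=\{d_0,\dots,d_{n+1}\}$ contains the degrees occurring in \Omc; all degrees from \Tmc contribute only a constant additive amount, while the degrees coming from \Amc contribute at most $|\Amc|$ many values. Hence $|\Dmc|=O(|\Amc|)$, and the loop counter $i$ can be stored in $O(\log|\Amc|)$ bits. The threshold $d_i$ itself is one of the ABox degrees and can be addressed by a pointer of logarithmic size; one does not need to materialize $\widehat{\Omc_{\ge d_i}}$ explicitly, since the reduction simply restricts the classical rewriting/evaluation to those ABox assertions whose degree is at least $d_i$, a condition checkable in logspace when the oracle asks for an ABox fact. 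Each call at line~3 of the algorithm is thus a classical \DLLR CQ-entailment check over an ABox of size at most $|\Amc|$, which is in \AC, and therefore in \LS, with respect to data complexity, as recalled after the discussion of Proposition~\ref{prop:reduc} and in~\cite{ACKZ09}.

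Finally, I would combine these observations: the outer loop reuses its working space across iterations (the counter and the current threshold being the only persistent data), and each iteration runs a logspace subroutine, so the overall computation stays within logarithmic space. The only subtle point — and the one I expect to be the main obstacle — is to confirm that the loop can genuinely be implemented without accumulating space across iterations: this requires noting that the cut $\widehat{\Omc_{\ge d_i}}$ does not need to be written out, but can be accessed on demand via a logspace filter on the input ABox, so that the classical query-answering subroutine can be composed with the outer sweep without blowing up the space bound (composition of logspace-computable reductions with logspace decision procedures remains in logspace). This yields the claimed \LS bound in data complexity.
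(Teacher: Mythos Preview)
Your proposal is correct and follows essentially the same approach as the paper: iterate Algorithm~\ref{alg:degree} over the finitely many degrees in $\Dmc$, appeal to Theorem~\ref{thm:reduct} for correctness at each step, and use the \AC (hence \LS) data-complexity bound for classical \DLLR query answering for each call, observing that no space accumulates across iterations. You are in fact more careful than the paper on two points the paper treats only informally: that the cut ontology need not be materialised but can be streamed through a logspace filter, and that composing the outer loop with the inner logspace subroutine remains in \LS.
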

We will later see that this upper bound can indeed be reduced to \AC by seeing a degree query as a special case of a threshold
query. However, the method that provides a tight complexity bound requires a new implementation of the rewriting approach,
with all its associated optimizations, in contrast to the method from Algorithm~\ref{alg:degree}, which can simply call any
existing classical tool; e.g. \cite{GLL+ORE12,CCG+SEBD15}.

Computing the whole set of pairs $\ans(q(\vx),\Omc)$ is a more complex task. Although we can follow an approach similar to 
Algorithm \ref{alg:degree}, where the answers to $q(\vx)$ are computed for each ontology $\widehat\Omc_{\ge d_i}$, 
in order to assign the appropriate degree to each answer, we need to either keep track of all the answers found so far, or add
a negated query which excludes the answers with a higher degree. In both cases, we require a different approach and
a potential larger use of memory.
On the other hand, the whole set of answers $\ans(q(\vx),\Omc)$ will usually contain many answers that hold with a very low
degree, which may not be of much interest to the user making the query. When dealing with degrees, a more meaningful
task is to find the $k$ answers with the highest degree, for some natural number $k$; i.e., the \emph{top-$k$ answers} of
$q$.

Algorithm \ref{alg:degree} once again suggests a way to compute the top-$k$ answers. As in the algorithm, one starts with
the highest possible degree, and expands the classical ontology by including the axioms and assertions with a lower degree. The
difference is that one stops now when the query returns at least $k$ tuples as answers. At that point, the tuples found are
those with the highest degree for the query. As before, each of these queries can be answered in \AC in data complexity,
which yields a \LS upper bound for answering top-$k$ queries in data complexity. 

\begin{corollary}
\label{cor:logspace}
Top-$k$ queries over consistent G-\DLLR ontologies can be answered in logarithmic space w.r.t.\ the size of the ABox. 
\end{corollary}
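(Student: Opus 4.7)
The plan is to generalise Algorithm~\ref{alg:degree} so that, instead of stopping at the first successful classical query, it enumerates the top-$k$ answer tuples in decreasing order of their associated degrees. Let $\Dmc=\{d_0,d_1,\ldots,d_{n+1}\}$ be the degrees appearing in \Omc, ordered so that $d_{n+1}=1>d_n>\cdots>d_0\ge 0$. By Theorem~\ref{thm:reduct}, for any candidate tuple \va the test $\Omc\models q(\va)\ge d_i$ reduces to the classical test $\widehat\Omc_{\ge d_i}\models q(\va)$, and classical CQ evaluation in \DLLR is in \AC---hence in \LS---w.r.t.\ the size of \Amc \cite{ACKZ09}. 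In particular, the degree $d(\va)$ computed by Algorithm~\ref{alg:degree} is obtainable in logarithmic space.

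The procedure I propose has two nested loops. The outer loop runs over the degree levels $d_{n+1},d_n,\ldots,d_0$ in decreasing order. For each $d_i$, the inner loop enumerates the candidate tuples $\va\in\NI^{|\vx|}$ using a logarithmic counter (valid since $|\vx|$ is fixed in data complexity), invokes the degree-computing subroutine of Algorithm~\ref{alg:degree} on \va, and emits \va whenever $d(\va)=d_i$. An additional logarithmic counter tracks how many tuples have been emitted, and the whole procedure halts as soon as this counter reaches $k$ or the outer loop exhausts \Dmc.

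Correctness follows because, by Lemmas~\ref{lem:cut} and~\ref{lem:class}, scanning the degree thresholds top-down makes the algorithm emit tuples in non-increasing order of their true degrees, so the first $k$ emissions form a top-$k$ set. The space bound is obtained by composition: only a constant number of log-sized pointers (for the threshold index, the candidate pointer, and the emission counter) live on the work tape simultaneously, and each call to the degree subroutine is itself in \LS, which is closed under such composition. The main subtlety I anticipate is handling ties among tuples sharing a degree without resorting to sorting; the scheme sidesteps this by visiting candidates in a fixed counter-induced order within each outer iteration and emitting them on the fly until the global counter hits $k$, so no buffer of already-seen answers is ever materialised.
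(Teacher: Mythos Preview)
Your proposal is correct and follows essentially the same approach as the paper: iterate over the degree levels $d_{n+1}>\cdots>d_0$ top-down, use classical \DLLR query answering (in \AC, hence \LS) at each level via Theorem~\ref{thm:reduct}, and stop once $k$ answers have been collected. The paper's argument is considerably sketchier—it simply says to run the classical query over $\widehat\Omc_{\ge d_i}$ for decreasing $i$ and halt when at least $k$ tuples are returned—whereas you spell out the candidate enumeration, the per-tuple degree computation via Algorithm~\ref{alg:degree}, and the tie-handling, all of which the paper leaves implicit; but the underlying idea is the same.
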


\section{Threshold Queries over G\"odel Semantics}
\label{sec:tq}

We now turn our attention to the case of threshold queries, but keeping the assumption of the G\"odel semantics in place.
The first thing to notice when considering threshold queries is that the simple approach developed in the previous section,
where one calls a classical query answering engine over a cut subontology, cannot work. Indeed, as each atom needs to
be satisfied potentially to a different degree, there is no one cut that can suffice to answer them all. Indeed, we have already
seen a TQ in Example~\ref{exa:queries} which has no answers even though the natural cut ontology provides one answer.

When considering Boolean threshold queries, it may be tempting to simply try to verify each threshold atom separatedly through
a cut ontology. However, such an approach is not sound due to the existentially quantified variables which need to be 
associated to a (potentially anonymous) individual. This problem is not new, as it arises already for conjunctive queries over
classical databases.

To answer these queries, we will adapt the query rewriting technique from the classical setting. The underlying idea is 
essentially the same, as described previously in this paper, where an atom $B(x)$ may be substituted by an atom $C(x)$ if
the TBox contains the axiom $C\sqsubseteq B$. However, one has to be careful with the degrees used. In fact,
some axioms may not be applied during the rewriting, if their degree is not large enough.

\begin{example}
Consider once again the TBox \Texa from Example~\ref{exa:run}, and suppose that we are interested in finding all popular
attractions, up to a given degree $d\in[0,1]$; that is, we have the query $q(x)=\textsf{Popular}(x)\ge d$. The TBox contains 
the axiom \ax[0.6]{\textsf{Museum}\sqsubseteq \textsf{Popular}}. This means that answers to $q(x)$ w.r.t.\ this TBox should 
also include the answers to $\textsf{Museum}(x)$, but this depends on the value of $d$, as we explain next.

Suppose that $d>0.6$; e.g., if we have $q(x)=\textsf{Popular}(x)\ge 0.7$. For an individual $a$ and any model \Imc
of the ontology, we have no guarantee that $\textsf{Popular}^\Imc(a^\Imc)\ge 0.7$ regardless of the degree of
$\textsf{Museum}^\Imc(a^\Imc)$. Indeed, even if $\textsf{Museum}^\Imc(a^\Imc)=1$, the only thing that can be guaranteed
is that the degree of $a$ belonging to \textsf{Popular} is at least $0.6$, which does not suffice to become a positive answer
to the query. Hence, there is no need to include \textsf{Museum} in the rewriting.

Suppose now that $d\le 0.6$; e.g., with the query $q(x)=\textsf{Popular}(x)\ge 0.5$.
In this case, we note that every individual $a$ such that $\textsf{Museum}^\Imc(a^\Imc)\ge 0.5$ must satisfy also that
$\textsf{Popular}^\Imc(a^\Imc)\ge 0.5$. Indeed, recall that under the G\"odel semantics, $f\Rightarrow e$ is either $e$ if
$e\le f$ or $1$ otherwise. Since \Imc satisfies the axiom \ax[0.6]{\textsf{Museum}\sqsubseteq \textsf{Popular}},
whenever $f=\textsf{Museum}^\Imc(a^\Imc)\ge 0.5$ holds, we know that the degree $e$ of $\textsf{Popular}^\Imc(a^\Imc)$
must be such that $f\Rightarrow e\ge 0.6$. If $e\le f$, this can only be true if $e\ge 0.6>0.5$. Otherwise, we know that 
$e>f\ge 0.5$, and hence any individual belonging to the concept \textsf{Museum} to degree at least $0.5$ is an answer to
the query $q(x)$.
\end{example}
This example shows that during the rewriting process, we only need to consider the axioms that hold to a degree greater
than the threshold of the current atom of interest. During the rewriting step, the original threshold is preserved regardless
of the bound from the axioms. We now proceed to describe the rewriting process in detail, following the ideas developed
originally for classical \DLLR and other members of the \DLL family through the \textsf{PerfectRef} algorithm. 
To aid understanding from readers knowledgeable with
the original method, we preserve as much of the terminology from~\cite{dl-lite} as possible.
From now on, in a query $q(\vx)=\varphi(\vx,\vy)$, we call all the variables in \vx \emph{distinguished}, and any variable that
appears at least twice within a query \emph{shared}. Note that there is no need to keep track of the variables that are not
distinguished nor shared (from now on, called undistinguished, unshared variables); it is only relevant that they can be adequately 
assigned a value. Hence, those variables will be
denoted by an underscore (`\uv'), and use $y=\uv$ to express that $y$ is one such variable.

\begin{definition}[applicability]
An axiom $\alpha$ is \emph{applicable} to the threshold atom $A(x)\ge d$ iff $\alpha$ is of the form \ax[e]{C\sqsubseteq A} and
$d\le e$. It is \emph{applicable} to the threshold atom $P(x_1,x_2)$ iff either (i) $x_2=\uv$ and $\alpha$ is of the form
\ax[e]{C\sqsubseteq \exists P} with $d\le e$; (ii) $x_1=\uv$ and $\alpha$ is of the form \ax[e]{C\sqsubseteq\exists P^-};
or (iii) $\alpha$ is of the form \ax[e]{Q\sqsubseteq P} or \ax[e]{Q\sqsubseteq P^-} with $d\le e$.

If $\alpha$ is applicable to the threshold atom $\gamma$, the \emph{result} of the application is the atom $gr(\gamma,\alpha)$
defined through the rules in Figure~\ref{fig:rules}.
\begin{figure}
\begin{itemize}
\item If $\gamma=A(x)\ge d$ and $\alpha=\ax[e]{A_1\sqsubseteq A}$, then $gr(\gamma,\alpha)=A_1(x)\ge d$
\item If $\gamma=A(x)\ge d$ and $\alpha=\ax[e]{\exists P\sqsubseteq A}$, then $gr(\gamma,\alpha)=P(x,\uv)\ge d$
\item If $\gamma=A(x)\ge d$ and $\alpha=\ax[e]{\exists P^-\sqsubseteq A}$, then $gr(\gamma,\alpha)=P(\uv,x)\ge d$
\item If $\gamma=P(x,\uv)\ge d$ and $\alpha=\ax[e]{A\sqsubseteq \exists P}$, then $gr(\gamma,\alpha)=A(x)\ge d$
\item If $\gamma=P(x,\uv)\ge d$ and $\alpha=\ax[e]{\exists P_1\sqsubseteq \exists P}$, then $gr(\gamma,\alpha)=P_1(x,\uv)\ge d$
\item If $\gamma=P(x,\uv)\ge d$ and $\alpha=\ax[e]{\exists P_1^-\sqsubseteq \exists P}$, then $gr(\gamma,\alpha)=P_1(\uv,x)\ge d$
\item If $\gamma=P(\uv,x)\ge d$ and $\alpha=\ax[e]{A\sqsubseteq \exists P^-}$, then $gr(\gamma,\alpha)=A(x)\ge d$
\item If $\gamma=P(\uv,x)\ge d$ and $\alpha=\ax[e]{\exists P_1\sqsubseteq \exists P^-}$, then $gr(\gamma,\alpha)=P_1(x,\uv)\ge d$
\item If $\gamma=P(\uv,x)\ge d$ and $\alpha=\ax[e]{\exists P_1^-\sqsubseteq \exists P^-}$, then $gr(\gamma,\alpha)=P_1(\uv,x)\ge d$
\item If $\gamma=P(x_1,x_2)\ge d$ and $\alpha\in\{\ax[e]{P_1\sqsubseteq P},\ax[e]{P_1^-\sqsubseteq P^-}\}$ then
	$gr(\gamma,\alpha)=P_1(x_1,x_2)\ge d$
\item If $\gamma=P(x_1,x_2)\ge d$ and $\alpha\in\{\ax[e]{P_1\sqsubseteq P^-},\ax[e]{P_1^-\sqsubseteq P}\}$ then
	$gr(\gamma,\alpha)=P_1(x_2,x_1)\ge d$
\end{itemize}
\caption{The result $gr(\gamma,\alpha)$ of applying the axiom $\alpha$ to the threshold atom $\gamma$.}
\label{fig:rules}
\end{figure}
\end{definition}
The \textsf{PerfectRef} algorithm constructs a union of threshold queries by iteratively substituting atoms $\gamma$ for which 
an axiom $\alpha$
is applicable, with the result $gr(\gamma,\alpha)$ of the application. This follows the idea of tracing backwards the axioms
in order to absorb the TBox into the query which was previously outlined. The pseudocode for \textsf{PerfectRef} is more 
formally described in Algorithm~\ref{alg:pr}. In the algorithm, $q[\gamma,\eta]$ is the query resulting from
substituting in $q$ the atom $\gamma$ with the atom $\eta$. The function $reduce(p,\gamma_1,\gamma_2)$ called in 
line \ref{alg:pr:red} simply returns the query obtained by applying the most general unifier between $\gamma_1$ and $\gamma_2$
to $p$. For unification, all nondistinguished, unshared variables are considered different. For simplicity, we always assume that
all nondistinguished, unshared variables are known, and hence call them \uv when testing applicability.
\begin{algorithm}[tb]
\DontPrintSemicolon
\KwData{Threshold query $q$, G-\DLLR TBox \Tmc}
\KwResult{Union of threshold queries $T$}
$T\gets \{q\}$ \;
\Repeat{$T'=T$}{
	$T' \gets T$ \;
	\For{\textbf{each} $p\in T'$}{
		\For{\textbf{each} $\gamma\in p$, and \textbf{each} $\alpha\in\Tmc$}{
			\If{$\alpha$ is applicable to $\gamma$}{
				$T \gets T\cup \{p[\gamma/gr(\gamma,\alpha)]\}$\;
			}
		}
		\For{\textbf{each} $\gamma_1,\gamma_2\in p$}{
			\If{$\gamma_1$ and $\gamma_2$ unify}{
				$T\gets T\cup\{reduce(p,\gamma_1,\gamma_2)\}$ \; \label{alg:pr:red}
			}
		}
	}
}
\Return $T$ \;
\caption{\textsf{PerfectRef}}
\label{alg:pr}
\end{algorithm}

Note that, just as in the classical case, the application of the $reduce$ function is necessary to guarantee correctness of the
rewriting. Specifically, a variable that is bound in a query $p$ may become unbound after the unification process, which may
allow more axioms to be applied for the rewriting.

Once again, the algorithm takes as input a threshold query $q$, and returns a union of threshold queries $T$ which is 
constructed by taking into account the information from the TBox \Tmc. The importance of this rewriting is that at this point, 
the answers to the original query $q$ w.r.t.\ an ontology $\Omc=(\Tmc,\Amc)$ can be obtained by applying the query $T$
to the ABox \Amc, seen as a standard database. 

Let $db(\Amc)$ be the ABox \Amc seen as a database. Note that since we have fuzzy assertions, the database will contain
binary relations (representing concept assertions) and ternary relations (representing the role assertions), where the last 
element of the relation is the degree; a number in the interval $[0,1]$. Under this view, a threshold query can also be seen
as a conjunctive query, taking into account the inequalities in the selection. Given a union of threshold queries $T$, $UCQ(T)$
denotes the fact that $T$ is being read as a UCQ in this sense. Given an ABox \Amc and a union of TQs $T$, we denote by
$\ans(db(\Amc),UCQ(T))$ the set of answers to $T$ w.r.t.\ $db(\Amc)$ from a database perspective. We also denote
by $\ans(q,\Omc)$ the set of answers to the TQ $q$ w.r.t.\ the ontology \Omc. We then obtain the following result.

\begin{theorem}
\label{thm:rewriting}
Let $\Omc=(\Tmc,\Amc)$ be a consistent G-\DLLR ontology, $q$ a TQ, and $T$ the union of TQs obtained through the rewriting. 
Then $\ans(q,\Omc)=\ans(db(\Amc),UCQ(T))$.
\end{theorem}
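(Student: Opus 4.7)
The plan is to establish the two inclusions of the set equality separately, adapting the correctness argument for the classical \textsf{PerfectRef} algorithm to account for the degree thresholds attached to atoms. Throughout, I exploit Corollary~\ref{cor:ican} to reduce reasoning about all models of \Omc to reasoning about the canonical interpretation $\Ican(\Omc)$, and I exploit the shape of the G\"odel residuum ($f \Rightarrow e = 1$ if $f \le e$ and $e$ otherwise) to justify why the applicability condition $d \le e$ suffices.

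For the soundness direction, $\ans(db(\Amc),UCQ(T)) \subseteq \ans(q,\Omc)$, I proceed by induction on the number of rewriting steps producing a given $p \in T$. The base case $p = q$ is immediate. For the inductive step, suppose $p' = p[\gamma / gr(\gamma,\alpha)]$ for an applicable axiom $\alpha$, and let $\pi$ be a match of $p'$ over $db(\Amc)$ returning $\va$. The critical case is $\alpha = \ax[e]{C \sqsubseteq A}$ with $\gamma = A(x) \ge d$ and $d \le e$: for any model \Imc of \Omc, $C^\Imc(\pi(x)) \ge d$ combined with $C^\Imc(\pi(x)) \Rightarrow A^\Imc(\pi(x)) \ge e$ forces $A^\Imc(\pi(x)) \ge d$, since either the residuum is $1$ (so $A^\Imc \ge C^\Imc \ge d$) or it equals $A^\Imc(\pi(x)) \ge e \ge d$. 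The remaining cases of Figure~\ref{fig:rules} are analogous, with existential rewriting steps handled by supplying the witness obtained from rules \textbf{R4}, \textbf{R5}, and the role inclusions from rule \textbf{R8}. The $reduce$ step trivially preserves matches, since a most general unifier only identifies variables that $\pi$ already maps to the same value.

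For the completeness direction, $\ans(q,\Omc) \subseteq \ans(db(\Amc),UCQ(T))$, I invoke Corollary~\ref{cor:ican}, so it suffices to start from a match $\pi$ of $q$ in $\Ican(\Omc)$ and construct some $p \in T$ together with a match $\pi'$ over $db(\Amc)$. The image of $\pi$ may include anonymous constants from \NN that were introduced by rules \textbf{R4} or \textbf{R5}; I show that each such witness can be folded back through a rewriting step. Concretely, if $\pi(y) = \eta$ where $\eta$ was created by an application of \textbf{R4} for \ax[e]{A \sqsubseteq \exists P} over some $\delta$, and if $y$ participates only in a single atom $P(x,y) \ge d$ with $\pi(x) = \delta$, then by construction $P^\Ican(\delta,\eta) = A^\Ican(\delta) \otimes e$, and the threshold $\ge d$ implies (under the G\"odel t-norm) both $A^\Ican(\delta) \ge d$ and $e \ge d$. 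The latter renders the axiom applicable in the sense of Figure~\ref{fig:rules}, and replacing $P(x,y) \ge d$ by $A(x) \ge d$ yields a rewriting step that eliminates $\eta$ from the image. Iterating over the anonymous witnesses in reverse generation order produces the desired $p$ and $\pi'$ sourcing all values from \NI.

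The main obstacle lies in the completeness direction, where an existential variable $y$ may occur in several atoms of $q$ (shared) and therefore does not immediately satisfy the applicability side-conditions, which require an undistinguished, unshared variable in the existential position. This is precisely the role of the $reduce$ operation: when two atoms containing $y$ refer to the same anonymous witness in $\Ican(\Omc)$, their most general unifier must also be applicable, and applying $reduce$ collapses them into a single atom in which $y$ becomes unshared. Showing that such reduce applications can always be scheduled to unblock a subsequent rewriting step, and that the process terminates by well-founded induction on the maximum generation depth of anonymous witnesses in the current match, is the most delicate bookkeeping in the proof.
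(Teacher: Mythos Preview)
The paper does not actually supply a proof of Theorem~\ref{thm:rewriting}; the result is stated and then immediately used to derive the \AC data-complexity bound, with the tacit understanding that the argument is the classical \textsf{PerfectRef} correctness proof of \cite{dl-lite} augmented by the threshold bookkeeping that was justified informally in the example preceding Algorithm~\ref{alg:pr}. Your proposal is precisely that adaptation---soundness by induction on the rewriting derivation, completeness by folding anonymous witnesses of $\Ican(\Omc)$ back through their generating axioms, with $reduce$ unblocking shared existential variables---and it is the expected and correct route. Your key G\"odel-specific observation, that $P^\Ican(\delta,\eta)=\min(A^\Ican(\delta),e)\ge d$ forces both $A^\Ican(\delta)\ge d$ and $e\ge d$ (the latter being exactly the applicability condition), is the one genuinely new ingredient relative to the classical proof, and you identify it correctly.

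One structural remark on your soundness induction. As phrased, your invariant is ``a match of $p$ in $db(\Amc)$ yields $\va\in\ans(q,\Omc)$,'' but in the inductive step you only establish that the database match $\pi$ of $p'$ becomes a match of $p$ \emph{in every model} $\Imc$, not in $db(\Amc)$; so the hypothesis as stated does not apply. The fix is standard: strengthen the invariant to ``a match of $p$ in an arbitrary model $\Imc$ of $\Omc$ extends to a match of $q$ in $\Imc$.'' The base case $p=q$ is then trivial, the inductive step is exactly the residuum computation you give, and finally a match in $db(\Amc)$ lifts to every model because every model satisfies the ABox assertions. With this adjustment the argument goes through.
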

A consequence of Theorem~\ref{thm:rewriting} is that, in terms of data complexity, answering a TQ w.r.t.\ a \DLLR ontology
is at most as costly as answering a CQ over a database. Indeed, note that althought the query $q$ is transformed into a 
larger UCQ, the data itself remains unchanged. This yields the following result.

\begin{theorem}
Answering threshold queries w.r.t.\ consistent G-\DLLR ontologies is in \AC w.r.t.\ data complexity.
\end{theorem}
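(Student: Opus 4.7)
The plan is to derive this complexity bound as a direct consequence of Theorem \ref{thm:rewriting}, since that theorem already does the heavy lifting by reducing TQ answering to standard database query evaluation. Concretely, for a consistent ontology $\Omc=(\Tmc,\Amc)$ and a threshold query $q$, the rewriting $T$ produced by \textsf{PerfectRef} depends only on $q$ and the TBox $\Tmc$; the ABox is not consulted during its construction. Hence in the data complexity measure, where $\Tmc$ and $q$ are treated as fixed and only $\Amc$ varies, $T$ has constant size and $UCQ(T)$ is a fixed first-order query.

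Next, I would argue that $\ans(db(\Amc), UCQ(T))$ can be computed in \AC in the size of $\Amc$. This appeals to the classical fact, recalled in the preliminaries, that evaluating an FO-query over a database is in \AC w.r.t.\ data complexity. The database $db(\Amc)$ here has unary relations with one degree attribute for concept assertions and binary relations with one degree attribute for role assertions; its size is linear in $|\Amc|$, so \AC in $|db(\Amc)|$ is \AC in $|\Amc|$.

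The one point that requires a brief justification is that threshold atoms of the form $A(x)\ge d$ or $P(x_1,x_2)\ge d$ are expressible in FO over $db(\Amc)$: each is simply a conjunction of a relational atom $A(x,z)$ (or $P(x_1,x_2,z)$) with the comparison $z\ge d$, where $d$ is a constant fixed by the query. Since the constants $d$ appearing as thresholds come from $q$ and from the rewriting steps (which only copy thresholds from the input query, as inspected in Figure~\ref{fig:rules}), the set of thresholds used is fixed in data complexity. Thus each disjunct of $UCQ(T)$ is an FO-formula built from atomic relations, equalities, and comparisons against finitely many fixed rationals, which remains in \AC under data complexity.

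The main obstacle, if any, is conceptual rather than technical: one must observe that the rewriting, even though it can blow up considerably in the size of $\Tmc$ and $q$, does not touch $\Amc$, and that the threshold conditions do not smuggle the ABox into the query. Once this is clear, combining Theorem~\ref{thm:rewriting} with the \AC bound for FO-evaluation over databases yields the result immediately.
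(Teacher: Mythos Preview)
Your proposal is correct and follows essentially the same approach as the paper: the paper simply observes that the rewriting $T$ from Theorem~\ref{thm:rewriting} depends only on $q$ and $\Tmc$, so the data remains unchanged and the \AC bound for FO-query evaluation over databases applies. Your additional justification that threshold atoms are FO-expressible over $db(\Amc)$ is a useful detail that the paper leaves implicit.
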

Before finishing this section, we return to a question on complexity left open in the previous section; namely, the precise
complexity of finding the degree of an answer to a conjunctive query. To answer this question, we first note that under the
G\"odel semantics, we can always see a degree query as a special case of a threshold query.

Given a CQ $q$, let $\At(q)$ be the set of all the atoms in $q$. For a degree $d\in[0,1]$, we can define the TQ
$TQ(q,d)=\bigwedge_{\gamma\in\At(q)}\gamma\ge d$. That is, $TQ(q,d)$ uses the same atoms as $q$, but assigns a minimum
degree of $d$ to each of them. Since the G\"odel semantics interprets the conjunction through the minimum operator, any 
answer of $TQ(q)$ yields a degree of at least $d$ to the original query $q$. 

\begin{lemma}
\label{lem:cdtotq}
Let \Omc be a consistent G-\DLLR ontology, $q$ a CQ, \va an answer tuple, and $d\in[0,1]$. It holds that $\Omc\models q(\va)\ge d$
iff $\Omc\models TQ(q(\va),d)$.
\end{lemma}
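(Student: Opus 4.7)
The plan is to exploit the defining property of the G\"odel t-norm, namely that $\otimes$ is the minimum operator. Under this semantics, the degree of a CQ under a match $\pi$ is simply the minimum of the degrees of its atoms, and the statement ``this minimum is at least $d$'' is logically equivalent to ``every atom has degree at least $d$'', which is precisely what $TQ(q,d)$ demands. So the lemma should reduce to a straightforward unfolding of definitions.

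More concretely, I would argue both directions simultaneously by picking an arbitrary model $\Imc$ of \Omc and observing that, for any tuple \va, the following chain of equivalences holds. First, $\Imc\models q(\va)\ge d$ means (by the definition of a certain answer to a CQ) that there exists a match $\pi\in\Pi(\Imc)$ with $q^\Imc(\va^\Imc,\pi(\vy))\ge d$; expanding the definition of $q^\Imc$ under the G\"odel t-norm, this reads $\min_{\alpha\in\At(q)}(\pi(\alpha))^\Imc\ge d$, which is equivalent to the conjunction $(\pi(\alpha))^\Imc\ge d$ for every $\alpha\in\At(q)$. But this last condition is exactly the statement that $\pi$ satisfies the threshold query $TQ(q(\va),d)$, so the existence of such a $\pi$ is equivalent to $\Imc\models TQ(q(\va),d)$. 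Universally quantifying over the models of \Omc then yields both directions of the lemma.

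No heavy machinery is needed here (in particular, the consistency of \Omc and the canonical interpretation of the previous section are not really used; consistency is assumed only so that the quantification over models is non-vacuous and matches the statement of Corollary~\ref{cor:ican}). The only subtle point worth spelling out is that the notion of ``match'' coincides for CQs and TQs --- both are simply assignments of the existential variables to domain elements --- so the same $\pi$ witnessing the CQ answer is the $\pi$ witnessing the TQ answer, and vice versa. Because matches are not constrained by any degree condition themselves, no book-keeping of intermediate degrees is required.

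The main obstacle, if any, is purely notational: one must be careful to interpret $TQ(q(\va),d)$ correctly, since $q(\va)$ already has the distinguished variables instantiated to \va, so the TQ built from it has only existential variables \vy, and its satisfaction by a match $\pi$ is exactly $(\pi(\alpha))^\Imc\ge d$ for every atom. Once this is unwound, the proof is a one-line appeal to the identity $\min_i x_i\ge d\iff \forall i.\, x_i\ge d$.
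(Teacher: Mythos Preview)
Your core identification is exactly what the paper has in mind: the lemma is stated without proof, preceded only by the remark that under the G\"odel t-norm the conjunction is the minimum, so requiring the CQ to hold to degree $d$ amounts to requiring every atom to hold to degree at least $d$. Your unfolding of this is fine at the level of a fixed match $\pi$.

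There is, however, a genuine gap in the step where you write that ``$\Imc\models q(\va)\ge d$ means \ldots\ that there exists a match $\pi\in\Pi(\Imc)$ with $q^\Imc(\va^\Imc,\pi(\vy))\ge d$.'' The paper defines $\Imc\models q(\va)\ge d$ as $\sup_{\pi\in\Pi(\Imc)}q^\Imc(\va^\Imc,\pi(\vy))\ge d$, and over an infinite domain this supremum need not be attained by any single match. Concretely, take $\Tmc=\{\ax{A\sqsubseteq\exists R}\}$, $\Amc=\{\ax{A(a)}\}$, and the Boolean query $q()=\exists y.\,R(a,y)$; a model \Imc with $R^\Imc(a^\Imc,\delta_n)=1-\tfrac{1}{n}$ for countably many $\delta_n$ satisfies $\Imc\models q()\ge 1$ (the supremum is $1$) yet admits no match $\pi$ with $R^\Imc(a^\Imc,\pi(y))\ge 1$, so $\Imc\not\models TQ(q(),1)$. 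Your per-model chain of equivalences therefore fails, and the claim that the canonical interpretation ``is not really used'' is precisely where the argument breaks. The fix is to invoke Corollary~\ref{cor:ican} to reduce both sides to $\Ican(\Omc)$: there, under the G\"odel t-norm, only the finitely many degrees occurring in \Omc (together with $0$) appear, so the supremum over matches is in fact a maximum, your existential reading becomes legitimate, and the rest of your argument goes through unchanged.
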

In order to find the degree of an answer, we can simply add as an answer variable after the rewriting one that looks at the
degrees from the database $db(\Amc)$. This does not affect the overall data complexity, and hence remains in \AC.

\begin{corollary}
Answering conjunctive queries w.r.t.\ consistent G-\DLLR ontologies is in \AC in data complexity.
\end{corollary}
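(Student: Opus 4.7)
The plan is to reduce the problem to threshold query answering, leveraging Lemma~\ref{lem:cdtotq} together with the \AC bound already established for TQs in data complexity. Given a CQ $q(\vx)$ and a threshold $d\in[0,1]$, Lemma~\ref{lem:cdtotq} tells us that $\Omc\models q(\va)\ge d$ iff $\Omc\models TQ(q(\va),d)$, and by the preceding theorem evaluating $TQ(q(\va),d)$ over a consistent G-\DLLR ontology is in \AC in data complexity. So for any fixed threshold the decision problem is immediately in \AC. The remaining task is to show that the \emph{degree} of an answer (rather than merely a yes/no against a prescribed bound) can also be obtained within \AC.

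To recover the actual degree, I would modify the rewriting of Theorem~\ref{thm:rewriting} as follows. Fix the query $q$ and TBox \Tmc. Treat $q$ formally as a TQ in which the threshold of every atom is a variable, and run \textsf{PerfectRef} symbolically (the thresholds are never used to block rule applications, since we are not fixing $d$ in advance); in data complexity \Tmc and $q$ are fixed, so this rewriting can be precomputed and produces a union of TQs of fixed size. For each rewritten CQ $p\in T$, evaluate $p$ against $db(\Amc)$ while projecting out, as additional answer columns, the degree field of each fact matched; then add to $p$ the conjunct that the ``reported degree'' equals the minimum of those columns. Because $p$ has a bounded number of atoms (the size of $p$ depends only on $q$ and \Tmc, which are fixed), the minimum is a bounded-fan-in operation expressible in first-order logic over $db(\Amc)$, hence in \AC. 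Finally, the degree of \va is the maximum over all matches across all $p\in T$, which is again a first-order selection on the database and thus in \AC.

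The main obstacle is making sure no unbounded aggregation is hidden in this procedure. The only ``aggregations'' needed are the min over the atoms in one rewritten query (bounded fan-in because $q$ and \Tmc are fixed in data complexity) and the max over all candidate degrees, which can be expressed as ``there exists a match producing this degree, and no match produces a strictly larger one''---a standard first-order formulation that stays within \AC. Consistency of \Omc is needed to invoke Corollary~\ref{cor:ican}, but Proposition~\ref{prop:reduc} together with the AC$^0$ bound for classical \DLLR consistency shows this precondition is itself checkable in \AC in data complexity, so it does not affect the overall bound.
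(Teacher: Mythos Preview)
Your overall strategy matches the paper's: invoke Lemma~\ref{lem:cdtotq} to view the CQ as a TQ, run \textsf{PerfectRef}, and then extract the degree from $db(\Amc)$ via an additional answer variable. The paper's own argument is a one-line sketch along exactly these lines.

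There is, however, a concrete gap in your degree-extraction step. You propose to run \textsf{PerfectRef} with the thresholds treated as a symbolic variable, explicitly \emph{never} letting the threshold block an axiom application, and then report the minimum over the degree fields of the ABox facts matched by each rewritten query $p$. This ignores the degrees of the TBox axioms used to produce $p$. Under the G\"odel semantics, if $p$ was obtained from $q$ by applying axioms with degrees $e_1,\ldots,e_k$, then a match of $p$ in $db(\Amc)$ with ABox degrees $f_1,\ldots,f_m$ only certifies $q$ to degree $\min\{e_1,\ldots,e_k,f_1,\ldots,f_m\}$, not $\min\{f_1,\ldots,f_m\}$. Concretely, take $\Tmc=\{\ax[0.5]{A\sqsubseteq B}\}$, $\Amc=\{\ax[1]{A(a)}\}$, and $q(x)=B(x)$. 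Your symbolic rewriting yields $\{B(x),A(x)\}$; matching $A(x)$ against the ABox gives degree $1$, so your procedure reports $1$, whereas the correct degree of $a$ for $q$ is $0.5$.

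The fix is easy and stays within \AC: since \Tmc and $q$ are fixed in data complexity, each rewritten query $p\in T$ comes with a fixed constant $e_p$ equal to the minimum degree over the axioms used to derive it; fold $e_p$ into the minimum you compute for $p$. With that correction your argument goes through and coincides with what the paper intends.
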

This finishes our analysis of the G\"odel t-norm, which also provides our main results. In the following section we briefly
visit the case where the underlying t-norm is not idempotent, and showcase that in general dealing with such semantics
becomes harder.

\section{Non-idempotent t-norms}

We now move our attention to the t-norms that are not idempotent; in particular the product and \L ukasiewicz t-norms. 
Unfortunately, as we will see, the correctness of the
reductions and algorithms presented in the previous sections rely strongly on the idempotency of the G\"odel t-norm, and 
does not transfer
directly to the other cases. However, at least for the product t-norm, it is still possible to answer some kinds of 
queries efficiently. 

First recall that Proposition \ref{prop:reduc} holds for the product t-norm as well. Hence, deciding consistency of 
a $\Pi$-\DLLR ontology remains reducible to the classical case and thus, efficient. We now show with simple examples that the 
other results do not transfer so easily.

\begin{example}
\label{exa:prod}
Consider the ontology $\Oexb:=(\Texb,\Aexb)$ where
$\Texb:=\{\left<A_i\sqsubseteq A_{i+1},0.9\right>\mid 0\le i <n\}$ and
$\Aexb:=\{\left<A_0(a),1\right>\}$. Note that $\Oexb=({\Oexb})_{\ge 0.9}$, but the degree for the query $q()=A_n(a)$
is $0.9^n$ which can be made arbitrarily small by making $n$ large.
\end{example}
Similarly, it is not possible to find the top-$k$ answers simply by layering the $\theta$-cuts for decreasing values of $\theta$ until
enough answers can be found.
\begin{example}
Let $\Oexb':=(\Texb,\Aexb')$, where $\Aexb':=\Aexb\cup\{\left<A_n(b),0.85\right>\}$ and \Texb, \Aexb are as in 
Example \ref{exa:prod}. The top answer for $q(x)=A_n(x)$ is $b$ with degree 0.85, but from $({\Oexb'})_{\ge 0.9}$
we already find the answer $a$, which is not the top one.
\end{example}
The main point with these examples is that, from the lack of idempotency of the t-norm $\otimes$, we can obtain low degrees
in a match which arises from combining several axioms and assertions having a high degree. 
On the other hand, the product behaves well for positive values in the sense that applying the t-norm to two positive values
always results in a positive value; formally, if $d,e>0$, then $d\otimes e>0$. Thus, if we are only interested in knowing
whether the result of a query is positive or not, there is no difference between the G\"odel t-norm and the product t-norm.

\begin{definition}
A tuple \va is a \emph{positive answer} to the query $q(\vx)$ w.r.t.\ the ontology \Omc (denoted by $\Omc\models q(\va)>0$) iff 
for every model \Imc of \Omc it holds that $q^\Imc(\va^\Imc)>0$.
\end{definition}
\begin{theorem}
If \Omc is a consistent $\Pi$-\DLLR ontology, then $\Omc\models q(\va)>0$ iff $\widehat\Omc\models q(\va)$.
\end{theorem}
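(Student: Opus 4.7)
The plan is to mimic the strategy used in the G\"odel case but replace ``degree at least $\theta$'' by ``degree strictly positive,'' exploiting the fact that under the product t-norm $d\otimes e>0$ iff both $d>0$ and $e>0$. Since $\Omc$ is consistent, Proposition~\ref{prop:reduc} gives that $\widehat\Omc$ is consistent, and Proposition~\ref{prop:can:model} then produces two canonical models: the fuzzy $\Ican(\Omc)$ and the classical canonical interpretation $\Ican(\widehat\Omc)$ (the latter is the classical construction from \cite{dl-lite}, equivalent to ours by the note at the end of the canonical interpretation section). The goal is to relate them by a ``support correspondence.''

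The key lemma I would prove is the following: one can schedule the rule applications for $\Ican(\Omc)$ and for $\Ican(\widehat\Omc)$ in lockstep so that, identifying their fresh domain elements via the matching applications of rules \textbf{R4}/\textbf{R5}, one has $A^{\Ican(\Omc)}(\delta)>0$ iff $A^{\Ican(\widehat\Omc)}(\delta)=1$ for every $A\in\NC$ and $\delta$, and analogously $P^{\Ican(\Omc)}(\delta,\eta)>0$ iff $P^{\Ican(\widehat\Omc)}(\delta,\eta)=1$ for every $P\in\NR$. The proof is a straightforward induction on the chosen rule application sequence: the initial empty interpretations satisfy the correspondence vacuously; rules \textbf{R1}/\textbf{R2} fire in the fuzzy case exactly when the corresponding assertion sits in $\widehat\Amc$ (since fuzzy assertions of degree $0$ are inadmissible); and each of the propagation rules \textbf{R3}, \textbf{R6}--\textbf{R8} updates a value to $d\otimes e$ on the fuzzy side and to ``true'' on the classical side, which coincide in support precisely by the product-positivity fact noted above. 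Rules \textbf{R4}/\textbf{R5} are handled by defining the bijection between fresh elements as the schedules progress.

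Granting the correspondence, a match $\pi$ of $q$ in $\Ican(\Omc)$ yields positive fuzzy degree iff every atom $\pi(\alpha)$ has positive value (again by product positivity), iff every atom $\pi(\alpha)$ is classically true in $\Ican(\widehat\Omc)$, iff $\pi$ is a classical match of $q$ in $\Ican(\widehat\Omc)$. Therefore $\Ican(\Omc)\models q(\va)>0$ iff $\Ican(\widehat\Omc)\models q(\va)$. Applying Corollary~\ref{cor:ican} on the left and the classical analogue of that corollary (stated at the end of the canonical interpretation section) on the right gives $\Omc\models q(\va)>0$ iff $\widehat\Omc\models q(\va)$, which is the theorem.

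The main obstacle is the alignment of anonymous elements created by rules \textbf{R4}/\textbf{R5}. Because different fair rule orderings produce only homomorphic, not identical, canonical interpretations (see Example~\ref{exa:can:mult}), I would fix a single fair schedule for $\Ican(\Omc)$ and define the schedule for $\Ican(\widehat\Omc)$ to mirror it step by step. This is legitimate because, by the support correspondence maintained inductively, a fuzzy rule is applicable with positive effect exactly when the mirrored classical rule is applicable, so the two constructions walk in parallel and produce canonical interpretations that agree on supports throughout.
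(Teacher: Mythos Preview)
Your support-correspondence strategy is sound, but the lockstep \emph{bijection} between fresh elements created by \textbf{R4}/\textbf{R5} does not hold as claimed. Take $\left<A\sqsubseteq\exists P,d\right>\in\Tmc$ and a stage at which $A^{\Ican(\Omc)}(\delta)>0$ and some $\eta$ already satisfies $0<P^{\Ican(\Omc)}(\delta,\eta)<A^{\Ican(\Omc)}(\delta)\cdot d$. Then fuzzy \textbf{R4} fires and produces a fresh $\eta_0$, yet by your inductive invariant $P^{\Ican(\widehat\Omc)}(\delta,\eta)=1$ on the classical side, so classical \textbf{R4} is \emph{not} applicable. (This is exactly the phenomenon of Example~\ref{exa:can:mult}, transported to the support level.) The fuzzy construction therefore spawns strictly more anonymous individuals, and the two schedules cannot walk in parallel via a bijection; the sentence ``a fuzzy rule is applicable with positive effect exactly when the mirrored classical rule is applicable'' is false for \textbf{R4}/\textbf{R5}. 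The repair is to drop the bijection and argue directly that (i) the support of $\Ican(\Omc)$, read as a $\{0,1\}$-interpretation, is a classical model of $\widehat\Omc$ (an axiom-by-axiom check using product positivity), and (ii) $\Ican(\widehat\Omc)$, read as a fuzzy interpretation, is a model of $\Omc$; then Proposition~\ref{prop:min:can} and its classical analogue supply homomorphisms in both directions, which is all the query transfer needs.

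The paper's argument is shorter still. The forward direction avoids canonical interpretations entirely: every classical model $\Imc$ of $\widehat\Omc$ is already a $\{0,1\}$-valued model of $\Omc$, so $\Omc\models q(\va)>0$ forces $q^\Imc(\va^\Imc)>0$ and hence $q^\Imc(\va^\Imc)=1$. Only the backward direction appeals to $\Ican(\Omc)$, asserting (without further argument) that $\widehat\Omc\models q(\va)$ implies $q^{\Ican(\Omc)}(\va^{\Ican})>0$; your corrected support lemma is precisely what justifies that step, which the paper leaves implicit. So your route is not wrong, just heavier than necessary for one direction and in need of the fix above for the other.
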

\begin{proof}
Every model of $\widehat\Omc$ is also a model of \Omc, with the additional property that the interpretation function maps 
all elements to $\{0,1\}$. If $\Omc\models q(\va)>0$, then for every model \Imc of $\widehat\Omc$ it holds that
$q^\Imc(\va^\Imc)>0$ and thus $q^\Imc(\va^\Imc)=1$, which means that $\widehat\Omc\models q(\va)$.

Conversely, if $\widehat\Omc\models q(\va)$, then the canonical interpretation is such that $q^\Ican(\va^\Ican)>0$, and
hence for every model \Imc it also holds that $q^\Imc(\va^\Imc)>0$.
\end{proof}
This means that, for the sake of answering positive queries over the product t-norm, one can simply ignore all the truth
degrees and answer a classical query using any state-of-the-art engine. In particular, this means that positive answers 
can be found in \AC in data complexity 
just as in the classical case.

\medskip

We now briefly consider the \L ukasiewicz t-norm, which is known to be the hardest to handle due to its involutive negation
and nilpotence, despite being in many cases the most natural choice for fuzzy semantics \cite{BoCP-JAIR17}.
As mentioned already, Proposition \ref{prop:reduc} does not apply to the \L ukasiewicz t-norm. That is, 
there are consistent \L\mbox{-}\DLLR ontologies whose classical version is inconsistent (see
Example \ref{exa:incons}). As a result, there is currently no known method for deciding consistency of these ontologies, let 
alone answering queries. The culprits for this are the involutive negation, which is weaker than the negation used in the
other two t-norms, but also the nilpotence, which may combine positive degrees to produce a degree of 0. The latter also
means that, even if one could check consistency, it is still not clear how to answer even positive queries.

\begin{example}
Consider the ontology $\Oexc:=(\Texc,\Aexc)$ where 
\begin{align*}
\Texc:={} &\{\left<A_0\sqsubseteq A_1,0.5\right>,\left<A_1\sqsubseteq A_2,0.5\right>\} \\ 
\Aexc:={} & \{\left<A_0(a),1\right>\}.
\end{align*}
Note that \Oexc is consistent, but there is a model \Imc (e.g., the canonical interpretation) of this ontology which sets
$A_2^\Imc(a^\Imc)=0$. Hence, $a$ is not a positive answer to the query $q(x)=A_2(x)$ even though
it is an answer of $q(x)$ over $\widehat\Oexc$.
\end{example}
Importantly, if we extend \DLLR with the possibility of using conjunctions as constructors for complex concepts, 
one can show following the ideas from \cite{BoCP-JAIR17,BoCP-PRUV14} that
deciding consistency of a \L-\DLLR ontology is \NP-hard in combined complexity even if negations are disallowed;
see Appendix \ref{app:NP} for full details.
In the classical case, this logic---which is called \DLLH---has a polynomial time consistency problem \cite{ACKZ09}.
This gives an indication that dealing with \L-\DLLR may also lead to an increase in complexity.

\medskip

Interestingly, the rewriting technique from Section~\ref{sec:tq} also works for other 
t-norms---modulo some basic modifications---when answering threshold queries. Recall, for example, that given an axiom
\ax[e]{A\sqsubseteq B}, and a threshold atom $B(x)\ge d$, if $e\ge d$ then the rewriting technique would substitute this
atom with $A(x)\ge d$. Although this substitution is sound for the idempotent G\"odel t-norm, it does not work directly for
the other ones. For example, under the product t-norm, if we set $d=e=0.9$ we note that guaranteeing $A^\Imc(x)\ge 0.9$ 
does not necessarily implies, in a model \Imc of \ax[e]{A\sqsubseteq B} that $B^\Imc(x)\ge 0.9$. Indeed, as long as 
$B^\Imc(x)\ge 0.81$, the axiom is satisfied in this case. A similar argument can be made for the \L ukasiewicz t-norm. 
Hence, we need to increase the required degree for the rewritten atom. 

Recall from the properties of the residuum that for every t-norm $\otimes$ it holds that 
$A^\Imc(x)\Rightarrow B^\Imc(x)\ge e$ iff $B^\Imc(x)\ge A^\Imc(x)\otimes e$. Thus, to ensure that $B^\Imc(x)\ge d$
it suffices to guarantee that $A^\Imc(x)\otimes e\ge d$. In the case of the product t-norm, this is akin to the condition
$A^\Imc(x)\ge d/e$. For the \L ukasiewicz t-norm the condition translates to the inequality $A^\Imc(x)\ge \min\{1, d+1-e\}$. We can then
apply the same \textsf{PerfectRef} algorithm, with a new definition of the function $gr$ that changes the last degree
(which is always $\ge d$ in Figure~\ref{fig:rules}) with the new degree developed here. Overall, this yields the following complexity
result.

\begin{theorem}
Answering threshold queries w.r.t.\ consistent \DLL ontologies is in \AC in data complexity.
\end{theorem}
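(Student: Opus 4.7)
The plan is to adapt the \textsf{PerfectRef} rewriting from the previous section by redefining the threshold produced by the $gr$ function in a $\otimes$-sensitive way, as already hinted in the paragraph preceding the theorem. Concretely, I would keep the case analysis of Figure~\ref{fig:rules} and the applicability condition $d\le e$ unchanged, but replace every occurrence of the output threshold $d$ on the right-hand side by $e\Rightarrow d$, where $\Rightarrow$ is the residuum of the underlying t-norm. This yields the original rule in the G\"odel case (since $e\Rightarrow d=d$ whenever $d\le e$), the threshold $d/e$ for the product t-norm, and $\min\{1,d+1-e\}$ for \L ukasiewicz, matching the derivations given just before the theorem. The generalised algorithm is otherwise identical to Algorithm~\ref{alg:pr}, and in particular the $reduce$ step is unaffected, since unification operates only at the syntactic level of atoms and variables and does not interact with the thresholds.

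Next I would re-establish the analogue of Theorem~\ref{thm:rewriting}, i.e.\ that $\ans(q,\Omc)=\ans(db(\Amc),UCQ(T))$ for the new rewriting $T$. For soundness, the residuum identity $(e\Rightarrow d)\otimes e\ge d$ ensures that if a match witnesses $A(x)\ge e\Rightarrow d$ in a model~\Imc of $\alpha=\ax[e]{A\sqsubseteq B}$, then that same match witnesses $B(x)\ge d$ in~\Imc; the same argument applies, case by case, to the remaining rewriting schemas involving existentials and role inclusions, using the semantics of $\exists Q$ and $P^-$ together with monotonicity of $\otimes$. For completeness, I would appeal to Corollary~\ref{cor:ican}: if $\va\in\ans(q,\Omc)$ then $\Ican(\Omc)\models q(\va)$, and I would argue by induction on the length of the derivation of the witnessing degrees through the rule applications R1--R8 that every such derivation corresponds to a sequence of rewriting steps producing a query $p\in T$ whose atoms match directly in $db(\Amc)$. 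The key point is that each rule R3--R8 multiplies the current degree by the axiom's $e$ via $\otimes$, which is exactly inverted by the residuum $e\Rightarrow\cdot$ used in $gr$.

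The main obstacle is the completeness direction, which is more delicate than in the G\"odel case because the degrees accumulated along a chain of rule applications in $\Ican(\Omc)$ no longer coincide with the pointwise minima of the axiom degrees; for the product and \L ukasiewicz t-norms they can be strictly smaller, and the inductive hypothesis must track the composition of residua along such chains. I would handle this by showing that along any chain $A_0\sqsubseteq A_1,\ldots,A_{n-1}\sqsubseteq A_n$ with degrees $e_1,\ldots,e_n$ contributing to $A_n^\Ican(\delta)\ge d$, the iterated residuum $e_1\Rightarrow(e_2\Rightarrow(\cdots(e_n\Rightarrow d)\cdots))$ is exactly the threshold built up by successive rewriting steps from the atom $A_n(x)\ge d$, so the ABox assertion feeding the chain must satisfy that threshold; the same pattern covers the role cases, and the treatment of anonymous elements produced by R4/R5 is handled by the $\uv$ convention already used in the definition of $gr$.

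Once soundness and completeness are in place, the complexity claim is immediate. The rewriting $T$ depends only on $q$ and \Tmc, not on \Amc, so its size is bounded independently of the data; the thresholds appearing in $T$ are fixed rational (or piecewise rational) functions of the TBox degrees and can be precomputed. Evaluating $UCQ(T)$ over $db(\Amc)$ is a fixed first-order query with inequality selections over a finite relational database, and hence is in \AC in data complexity, yielding the stated bound.
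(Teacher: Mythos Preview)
Your proposal is correct and follows essentially the same approach as the paper: modify the output threshold in $gr$ from $d$ to the residuum-adjusted value $e\Rightarrow d$ (equivalently $d/e$ for product and $d+1-e$ for \L ukasiewicz, under the applicability condition $d\le e$), leave the rest of \textsf{PerfectRef} untouched, and invoke the FO-evaluability of the resulting UCQ over $db(\Amc)$ for the \AC bound. In fact you supply more detail than the paper does---the paper merely sketches the threshold adjustment in the paragraph preceding the theorem and then states the result, whereas you outline explicit soundness (via $(e\Rightarrow d)\otimes e=\min(e,d)=d$) and completeness (via the canonical interpretation and an induction tracking iterated residua along chains of rule applications) arguments.
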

Note that this theorem does not solve the problems sketched before for non-idempotent t-norms. Indeed, it is still not
clear how to check for consistency of a \L-\DLL ontology. Moreover, this result cannot be used to answer CQs because
the analogous to Lemma~\ref{lem:cdtotq} does not hold. Indeed, suppose that we have a simple CQ with only two atoms:
\[
q(x)= A(x) \land B(x).
\]
To turn $q(x)\ge d$ into a TQ, we need to assign a threshold to each of the atoms. Note however that, under a non-idempotent
t-norm, we cannot assign the same degree $d$ to each atom, as their conjunction would become in fact lower than $d$. 
To be more precise consider the product t-norm and $d=0.9$. Note that an answer to the TQ 
$A(x)\ge 0.9 \land B(x)\ge 0.9$ is \emph{not} necessarily an answer to $q(x)\ge 0.9$ because there could be a model 
that assigns both atoms to degree 0.9; hence the product of those degrees is $0.81<0.9$. To use a TQ, we need to choose
two degrees $d_1,d_2$ such that $d_1\cdot d_2=0.9$, and construct the TQ $A(x)\ge d_1 \land B(x)\ge d_2$. But there
are infinitely many choices to make in this regard, hence we cannot even construct a finite UTQ. Thus, unfortunately, although
we are able to answer TQs efficiently (if the ontology is known to be consistent), degree queries remain an open problem for
non-idempotent t-norms. 

\section{Conclusions}

In this paper we have studied the problem of answering queries over fuzzy ontologies written in \DLL. Our goal was to 
cover the gap in this area left by previous research. Indeed, although query answering w.r.t.\ ontologies is still an active topic, 
most work referring to fuzzy terminologies or ABoxes focused on the so-called Zadeh semantics, which does not preserve
desired properties from the mathematical fuzzy logic point of view. To our knowledge, only Mailis and Turhan 
\cite{MaTu-JIST14,MaTZ-DL15} have studied this problem based on t-norms, and found solutions based on the G\"odel t-norm.
However, they limited their approach to \emph{classical} TBoxes. They left open the problems of dealing with graded
TBoxes, handling threshold queries, and dealing with non-idempotent t-norms. 

A second goal of our work was to reuse as much as possible the classical techniques, in order to avoid an implementation
overhead when our algorithms will be, in future work, implemented and tested. As a result, we developed a method for 
answering degree queries which relies heavily on a classical query answering tool as a black box. Through this method, we
can take advantage of all the existing optimisations and improvements from that area, and simply use a better tool whenever
it becomes available without having to worry about the internal intricacies that make it work. In few words, our algorithm
for answering CQs w.r.t.\ the G\"odel semantics simply considers the classical version of the cut of the ontology. That is,
the method ignores all axioms that hold to a degree lower than the threshold imposed, and then sees the remaining query
answering question as a classical one. We emphasise that this approach works perfectly even if the TBox is graded. This means
that our results improve those from \cite{MaTu-JIST14} by allowing for fuzzy TBox axioms and not requiring a new rewriting
of the query.

Dealing with threshold queries, where each atom can be independently assigned a different degree, turns out to be more 
complex technically. In fact, we were not able to produce a direct reduction to a classical query answering problem---and it
is unlikely that such a reduction could exist, given the nature of the graded axioms. However, we could still exploit the main
ideas from the classical scenario, adapting the well-known \textsf{PerfectRef} method to the fuzzy scenario. In some sense 
\textsf{PerfectRef} absorbs the TBox into the query, forming a larger UCQ which can be answered classically, seeing the 
ABox as a database. In our case, we also need to take care of the degree at which the rewritten atoms should hold, when
creating the new query. Under the G\"odel semantics, it suffices to preserve the same degree from the original query, but for
non-idempotent t-norms the degree has to be increased accordingly to avoid including spurious answers. Importantly,
this shows that answering threshold queries w.r.t.\ consistent fuzzy ontologies is in the same complexity class (\AC) in data 
complexity as for the classical case, regardless of the t-norm underlying the semantics. The only caveat is that it is not 
known how to verify consistency of a fuzzy \DLL ontology in general. The idempotency of the G\"odel t-norm allowed us then
to show that CQ answering w.r.t.\ consistent G-\DLL ontologies is also in \AC in data complexity. This latter bound does not
hold for non-idempotent t-norms.

It is worth noting that the methods for answering degree and threshold queries both ultimately rely on a rewriting of the query 
to represent the information expressed in the TBox. While the rewriting does not affect the \emph{data} complexity, it is well
known that the UCQ obtained through \textsf{PerfectRef} may grow exponentially \cite{dl-lite,PMH10:elrewritingtodatalog}. This means
that, depending on the instance, answering these queries may still be impractical. For that reason, different rewriting and
answering techniques have been developed and tested; for example, rewriting into a Datalog program instead of an UCQ
\cite{GoSc-KR12,GoOP-ICDE11}. Our approach for solving threshold queries, given its reliance on \textsf{PerfectRef}, suffers
from the same drawbacks. In order to use more optimised approaches, it is necessary to study whether other existing rewritings
can also be adapted to the fuzzy setting. On the other hand, the approach to degree queries is, as mentioned already, fully
black box: we only need to call an unmodified classical query answering tool repeatedly. This allows us to directly \emph{plug}
whichever system performs best, without worrying about the implementation overhead of understanding and adapting the
existing tools. 

Through illustrative examples, we showed that dealing with CQs is in general harder when the underlying t-norm is not 
idempotent. The main issue is that there is no unique way to decide the bounds for the different degrees to which atoms
should hold to satisfy the lower bound for a conjunction of atoms. The problem is exacerbated by the fact that it is not even
clear how to decide consistency of ontologies under the nilpotent t-norm. Indeed, even in the absence of an ABox, the
\L ukasiewicz t-norm may impose upper bounds in the degrees of some assertions which are not obvious to detect, and could
contradict other conditions.

As
future work, we are also interested in implementing and testing our ideas, with the help of some fuzzy ontologies which
will be developed for specific application domains.

 \bibliographystyle{acmtrans}
 \bibliography{main_tplpv3X}

\appendix

\section{NP-Hardness for the Horn Case}
\label{app:NP}

In this appendix we show that ontology consistency in \L-\DLLH, a DL closely related to \L-DLLR, is \NP-hard. The proof builds
on the idea originally developed for a finitely-valued logic \cite{BoCP-PRUV14},%
\footnote{In reality, we use a simplified form based on 3-SAT, rather than the more general m-SAT from that work.}
extended with the methods from
\cite{BoCP-JAIR17,BoCP-IJCAI15} to deal with infinitely-valued t-norms. For brevity, we consider only a restricted version of
\L-\DLLH which already suffices to show hardness.

A \L-\DLLH GCI is an expression of the form $\left<B\sqsubseteq C,d\right>$ where $B,C$ are built through the grammar
$B::= A\mid B\sqcap B\mid \bot$, where $A\in \NC$; that is, they are conjunctions of concept names, and $d\in[0,1]$. The notion of
an ABox, a TBox, and an ontology are analogous to the \DLLR setting. The semantics of this logic is defined as for \DLLR, with
the addition that the interpretation function is defined for conjunctions as 
$(B\sqcap C)^\Imc(\delta):=B^\Imc(\delta)\otimes C^\Imc(\delta)$ and for bottom as $\bot^\Imc(\delta):=0$.

We show \NP-hardness through a reduction from the well-known problem of satisfiability of 3-CNF formulas \cite{Cook-SAT71}.
Very briefly, a 3-clause is a disjunction of exactly three literals (variables or negated variables) and a 3-CNF formula is a 
conjunction of 3-clauses. 

The main idea behind the reduction is to use an intermediate degree greater than 0 to simulate the logical truth value \emph{false}
from classical propositional logic. This allows us to simulate the disjunction from the 3-clauses through a conjunction of concepts.
This idea is formalised in full detail next.

Consider the set \Vmc of propositional variables. For each $v\in\Vmc$ we define two concept names $A_v$ and $A'_v$ with the
intuitive meaning that $A_v$ stands for the valuation making $v$ true, and $A'_v$ for the valuation making $v$ false.
We define the function $\rho$ which maps propositional literals to concept names as: 
\[
\rho(\ell) := 
	\begin{cases}
	 A_v & \text{if } \ell=v\in\Vmc \\
	 A'_v & \text{if } \ell=\neg v, v\in\Vmc.
	\end{cases}
\]
This function is extended to 3-clauses by defining $\rho(\ell_1\lor \ell_2\lor \ell_3):=\rho(\ell_1)\sqcap \rho(\ell_2)\sqcap \rho(\ell_3)$.
Abusing the notation, we identify the 3-CNF formula $\varphi$ with the set of 3-clauses it contains.

Let $\varphi$ be a propositional formula in 3-CNF, and $\var(\varphi)$ represent the set of all propositional variables appearing in
$\varphi$. We construct the ontology $\Omc_\varphi$ consisting of the very simple ABox 
$\Amc_\varphi:=\{\left<A_0(a),1\right>\}$
and the TBox
\begin{align}
\Tmc_\varphi := \{ & 
	\left<A_v \sqcap A_v \sqcap A_v \sqsubseteq A_v \sqcap A_v \sqcap A_v \sqcap A_v,1\right>, \label{tbox:one}\\ & 
	\left<A'_v \sqcap A'_v \sqcap A'_v \sqsubseteq A'_v \sqcap A'_v \sqcap A'_v \sqcap A'_v,1\right>,  \label{tbox:two}\\ &
	\left<A_v \sqcap A'_v \sqsubseteq \bot,1/3\right>, \left<A_0\sqsubseteq A_v\sqcap A'_v, 2/3\right>\mid v\in\var(\varphi)\} \cup {} \label{tbox:three} \\ 
	\{ &
	\left<A_0 \to \rho(c),1/3\right> \mid c\in\varphi\} \label{tbox:four}
\end{align}

\begin{theorem}
The formula $\varphi$ is satisfiable iff the ontology $\Omc_\varphi=(\Amc_\varphi,\Tmc_\varphi)$ is consistent.
\end{theorem}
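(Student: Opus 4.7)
The plan is to show that the axioms together pin down the interpretation of each $A_v$ and $A'_v$ on the individual $a$ to precisely two admissible values, one of which plays the role of \emph{true} and the other of \emph{false}, and then to verify that the clause axioms in (\ref{tbox:four}) are satisfied exactly when the corresponding Boolean formula is. Throughout I would use the explicit arithmetic of the \L ukasiewicz t-norm: $x_1 \otimes \cdots \otimes x_k = \max\{x_1 + \cdots + x_k - (k-1),\, 0\}$ and $x \Rightarrow y = \min\{1-x+y, 1\}$.

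First I would analyse (\ref{tbox:one}) and (\ref{tbox:two}). Setting $x := A_v^\Imc(\delta)$, the axiom requires $(\max\{3x-2,0\}) \Rightarrow (\max\{4x-3,0\}) \geq 1$; this is equivalent to $\max\{3x-2,0\} \leq \max\{4x-3,0\}$, which (since $4x-3 \leq 3x-2$ for $x\leq 1$) forces $3x - 2 \leq 0$, i.e.\ $x \leq 2/3$, or $x = 1$. Thus every model satisfies $A_v^\Imc(\delta), A'_v^\Imc(\delta) \in [0, 2/3] \cup \{1\}$ for all $\delta$ and all $v \in \var(\varphi)$. Next, from the ABox we have $A_0^\Imc(a^\Imc) = 1$, so the two axioms in (\ref{tbox:three}) specialised at $a$ yield $A_v^\Imc(a^\Imc) \otimes A'_v^\Imc(a^\Imc) \leq 2/3$ and $A_v^\Imc(a^\Imc) \otimes A'_v^\Imc(a^\Imc) \geq 2/3$, hence $A_v^\Imc(a^\Imc) + A'_v^\Imc(a^\Imc) = 5/3$. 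Combining this equality with the discrete range $[0,2/3] \cup \{1\}$, the only solutions are $\{A_v^\Imc(a^\Imc), A'_v^\Imc(a^\Imc)\} = \{1, 2/3\}$, i.e.\ exactly one of the two concepts holds to degree $1$ and the other to degree $2/3$.

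This dichotomy lets me define, for any model $\Imc$, the valuation $\nu_\Imc(v) := \mathit{true}$ iff $A_v^\Imc(a^\Imc) = 1$. I would then show that, under this correspondence, the clause axioms (\ref{tbox:four}) express exactly propositional satisfaction. For a clause $c = \ell_1 \vee \ell_2 \vee \ell_3$ and its translation $\rho(c) = \rho(\ell_1) \sqcap \rho(\ell_2) \sqcap \rho(\ell_3)$, the axiom $\langle A_0 \sqsubseteq \rho(c), 1/3\rangle$ at the point $a$ reduces (using $A_0^\Imc(a^\Imc) = 1$) to $\rho(\ell_1)^\Imc(a^\Imc) + \rho(\ell_2)^\Imc(a^\Imc) + \rho(\ell_3)^\Imc(a^\Imc) \geq 7/3$. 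Because each summand lies in $\{2/3, 1\}$, the sum is at least $7/3$ iff at least one summand equals $1$, i.e.\ iff at least one literal is made true by $\nu_\Imc$. So $\Imc$ is a model iff $\nu_\Imc$ satisfies every clause of $\varphi$, proving the ``only if'' direction.

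For the ``if'' direction I would, given a satisfying valuation $\nu$ of $\varphi$, construct the interpretation with domain $\{a\}$ (interpreting the individual $a$ as itself), setting $A_0^\Imc(a) := 1$, and for each variable $v$ assigning $A_v^\Imc(a) := 1, A'_v^\Imc(a) := 2/3$ if $\nu(v) = \mathit{true}$, and the symmetric assignment otherwise. A routine check — retracing the arithmetic of the previous paragraphs — shows that all axioms of $\Tmc_\varphi$ are satisfied, so $\Omc_\varphi$ is consistent. The main technical hurdle is really the first paragraph: squeezing the admissible values of $A_v^\Imc(a^\Imc)$ down to exactly two via the interplay of (\ref{tbox:one})–(\ref{tbox:three}), which crucially exploits both the nilpotence of the \L ukasiewicz t-norm (to make iterated conjunctions collapse to $0$ below a threshold) and the fact that $A_0$ is forced to degree $1$ by the ABox.
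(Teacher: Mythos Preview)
Your proposal is correct and follows essentially the same route as the paper's proof: both first use axioms (\ref{tbox:one})--(\ref{tbox:two}) to confine $A_v^\Imc,\,{A'_v}^\Imc$ to $[0,2/3]\cup\{1\}$, then combine (\ref{tbox:three}) with $A_0^\Imc(a^\Imc)=1$ to force the pair $\{A_v^\Imc(a^\Imc),{A'_v}^\Imc(a^\Imc)\}=\{2/3,1\}$, and finally read the clause axioms (\ref{tbox:four}) as ``at least one conjunct equals $1$'', yielding the two directions via the obvious valuation/model correspondence. Your version is in fact more explicit about the \L ukasiewicz arithmetic (e.g.\ the sum $5/3$ and the threshold $7/3$), which the paper leaves to the reader.
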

\begin{proof}
We start with some observations about the TBox $\Tmc_\varphi$. The axioms in lines \eqref{tbox:one} and \eqref{tbox:two} require
that every model $\Imc=(\Delta^\Imc,\cdot^\Imc)$ of $\Tmc_\varphi$ is such that $A_v^\Imc(\delta)\in [0,2/3]\cup\{1\}$ and
${A'_v}^\Imc(\delta)\in [0,2/3]\cup\{1\}$ for all $\delta\in\Delta^\Imc$.%
\footnote{Intuitively, if the interpretation is between 2/3 and 1, then conjoining three times will necessarily yield a greater value than
conjoining four times, due to monotonicity. The only way to avoid this is to make the conjunction reach 0, or stay in 1.}
Given the ABox axiom, the GCIs from line \eqref{tbox:three} guarantee that \emph{exactly one} between $A_v^\Imc(a^\Imc)$
and ${A'_v}^\Imc(a^\Imc)$ is interpreted as $1$, and the other one as $2/3$. Intuitively, the value $2/3$ will be read as ``false'' and
$1$ as ``true.''
Finally, the axioms in line \eqref{tbox:four} guarantee that for every clause $c\in\varphi$, at least one of the conjuncts in 
$\rho(c)$ is interpreted as $1$ (i.e., ``true'') at the element $a^\Imc$. Now we prove the property.

If $\varphi$ is satisfiable, let \Vmc be a valuation making $\varphi$ true. As customary, a valuation is a subset of $\var(\varphi)$ that
expresses which variables are mapped to true. We construct the interpretation $\Imc_\Vmc=(\{a\},\cdot^\Imc)$
having a singleton domain, where $A_0^\Imc(a)=1$ and for each $v\in\var(\varphi)$
\begin{align*}
A_v^\Imc(a) = {} &
	\begin{cases}
	  1 & \text{if } v\in\Vmc \\
	  2/3 & \text{otherwise}
	\end{cases} \\
{A'_v}^\Imc(a) = {} &
	\begin{cases}
	  1 & \text{if } v\notin\Vmc \\
	  2/3 & \text{otherwise}
	\end{cases}
\end{align*}
It is easy to see that this interpretation satisfies $\Amc_\varphi$ and all the axioms in lines \eqref{tbox:one}--\eqref{tbox:three} of 
$\Tmc_\varphi$. Since \Vmc is a model of $\varphi$, for every 3-clause $c\in\varphi$, there exists a literal $\ell$ in $c$ that \Vmc maps 
to true. By construction, $(\rho(\ell))^\Imc(a)=1$, and hence, $\Imc_\varphi$ is also a model of these axioms. Hence $\Omc_\varphi$
is consistent.

Conversely, let $\Imc=(\Delta^\Imc,\cdot^\Imc)$ be a model of $\Omc_\varphi$ and let $\delta=a^\Imc$. We construct the propositional
valuation $\Vmc_\Imc$ as follows: for every $v\in\var(\varphi)$, $v\in\Vmc$ iff $A_v^\Imc(\delta)=1$. By construction, for every
$v\notin\Vmc$ it holds that ${A'_v}^\Imc(\delta)=1$. Since \Imc is a model of $\Tmc_\varphi$, it satisfies the axioms in
line \eqref{tbox:four}, and hence for each clause $c$, there is a literal $\ell$ such that $\rho(\ell)^\Imc(\delta)=1$. This is true iff
$\Vmc_\Imc$ makes $\ell$ true, and in particular the whole clause $c$ true. Hence $\Vmc_\Imc$ satisfies $\varphi$.
\end{proof}
\end{document}